\theoremstyle{plain}
\newtheorem{theorem}{Theorem}[section]
\newtheorem{proposition}[theorem]{Proposition}
\newtheorem{definition}[theorem]{Definition}
\newtheorem{example}[theorem]{Example}
\newtheorem{notation}[theorem]{Notation}
\newtheorem{remark}[theorem]{Remark}
\numberwithin{equation}{subsection}
\begin{document}

\title{An extension of process calculus for asynchronous communications between agents with epistemic states
}

\author{Huili Xing
( College of Computer Science and Technology\\Nanjing University of Aeronautics and Astronautics; \\Department of Medical Information\\Binzhou Medical University)
}


\maketitle


\begin{abstract}
  It plays a central role in intelligent agent systems to model agent's epistemic state and its change. Asynchrony plays a key role in distributed systems, in which the messages transmitted may not be received instantly by the agents. To characterize asynchronous communications, asynchronous announcement logic (AAL) has been presented, which focuses on the logic laws of the change of epistemic state after receiving information.
 However AAL does not involve the interactive behaviours between an agent and its environment.
Through enriching the well-known $\pi$-calculus by adding the operators for passing basic facts and applying  the well-known action model logic to describe agents' epistemic states, this paper presents the e-calculus  to model epistemic interactions between agents with epistemic states.
The e-calculus can be adopted to characterize synchronous and asynchronous communications between agents. To capture the asynchrony, a buffer pools is constructed to store the basic facts announced and each agent reads these facts from this buffer pool in some order. Based on the transmission of  link names, the e-calculus is able to realize reading from this buffer pool in different orders.  This paper gives two examples: one is to read in the order in which the announced basic facts are sent (First-in-first-out, FIFO), and the other is in an arbitrary order.

\textbf{Keywords:} Process calculus, epistemic interaction, asynchronous communication
\end{abstract}

\section{Introduction}
It plays a central role in intelligent agent systems to model agent's epistemic attitude and its change.
Multi-agent systems are widely used in many fields, such as robot rescuing~\protect\citep{pettinati2019nmultiagentsystemrobotteams}, intelligent machine community~\protect\citep{seuken2008formalmodelsandalgorithmsunderuncertainty}, multi-UAV cooperation~\citep{harikumar2019multiUAV}, satellite detection~\citep{Azizi2019flyingsatellites}, target monitoring~\citep{Hu2020distributedadaptivetime}, etc.
Asynchrony plays a key role in distributed systems, in which there may be an unpredictable \emph{delay} between sending and receiving messages. In~\citep{Ditmarsch2017asynchronousannouncement},
Hans van Ditmarsch presented
Asynchronous announcement logic (AAL), which is obtained from the standard modal logic by adding two \emph{different} modal operators for sending and receiving messages, and~\citep{Balbiani2021asynchronousannouncement} and~\citep{Knight2017reasoningaboutknowledgeinasynchronoussystem} also explored AAL. The three assumed that messages are emitted publicly by an external and omniscient source and received individually by the agents in the order in which they are sent (First-in-first-out, FIFO). As usual, they also adopted the principle that an agent should know that a message is true before this agent announces this message. AAL is a variant of the well-known Public
announcement logic (PAL)~\citep{Baltag1998PAandCKandPS,Plaza2007publiccommunicationlogic}, where an announcement $\varphi$ is made to a group of agents and then becomes a common knowledge~\citep{Fagin2003reasoningaboutknowledge} in this group (each agent in this group knows $\varphi$, each agent in this group knows that each agent in this group knows $\varphi$, etc.). In PAL, messages are \emph{immediately} received at the same time they are sent.

Both PAL and AAL are devoted to formalizing the change of epistemic state after receiving information. In the family of such logics, Dynamic epistemic logic (DEL)~\citep{Ditmarsch2007dynamicepistemiclogic} is a more powerful one, which can describe more complex forms of communication, such as semi-private announcements, private announcements, announcements with suspicions, etc. Some extensions of this logic are presented to deal with special communication scenarios, e.g., Logic of communication and change~\citep{Benthem2006logicofccommunicationsandchange}, Group announcement logic~\citep{Agotnes2010groupannouncement}, Arbitrary action model logic~\citep{French2014composablelanguageforactionmodels,Hales2013arbitraryactionmodellogic} and some other logics~\citep{Enesser2020Gamedescriptionlanguageanddynamicepistemiclogiccompared,Renne2016Logicsoftemporalepistemicactions,Ditmarsch1999Thelogicofknowledgegamesshowingacard,Ditmarsch2005Dynamicepistemiclogicwithassignment,Benevides2020DynamicEpistemicLogicwithAssignmentsConcurrencyandCommunicationActions,Ditmarsch2009simulationandinformation,Ditmarsch2010Futureeventlogicaxioms,Balbiani2008publicannoucement}.
 All these 
 consider different communication patterns individually and focus on the logic laws that characterize epistemic actions and the epistemic effect caused by information transmissions.

To model epistemic changes in concurrent situations, based on the well-known $\pi$-calculus, this paper presents a process calculus oriented to epistemic interactions (the e-calculus, for short).  The $\pi$-calculus presented by R. Milner~\citep{Milner1989picalculus,Milner1992picalculus} is a theory of mobile systems whose components can communicate and change their structures. But the $\pi$-calculus does not refer to epistemic information. Different communication patterns may be illustrated easily in the $\pi$-calculus, which will help to characterize epistemic interactions of different communication types. The e-calculus provides a framework to consider how interactions between agents and the environment affect epistemic states. To this end, a finite number of agents form an e-system, in which each agent is related to a process that controls its behaviour and which is always arranged to run at an epistemic state. We apply the well-known Action model logic (AML)~\citep{Ditmarsch2007dynamicepistemiclogic,French2014composablelanguageforactionmodels} to describe agents' epistemic states.

In this paper, we intend to apply the e-calculus to characterize asynchrony. Since an announcement may not be received instantly by the agents in asynchronous communications, it is needed to consider how to store temporarily the messages which have been sent and are waiting to be received and in what order the agents will receive these messages. This is a central issue. In the e-calculus,
on the one hand, the control processes of each agent can transmit the link names like the usual $\pi$-processes, which will help to enable the links between agents to be created and disappear. On the other hand, these processes  can transmit basic facts, which may change the epistemic states of the related agents.
Due to these,  to capture the \emph{asynchrony}, this paper will construct a buffer pool to store the announced basic facts and each agent will share it. In the e-calculus, its storage cells  can be read in different orders.

This paper is organized as follows. We firstly recall the well-known $\pi$-calculus in the next section. Section 3 introduces the e-calculus. Section 4  gives several useful properties about epistemic interactions under AML. In Section 5 and Section 6, we apply the e-calculus to formalize synchronous and asynchronous communications respectively. Finally we end the paper with a brief discussion in Section 7.
\section{Preliminaries}\label{sec:preliminary}
\subsection{The $\pi$-calculus}\label{subsec:pi-calculus}
The well-known $\pi$-calculus was proposed by R. Milner~\citep{Milner1989picalculus,Milner1992picalculus}. This subsection will briefly recall its primary notions according to the literature~\citep{Sangiorgi2001pi-calculus}.

Let $Name$ be a countable infinite set of names, which can be thought of as
communication channels, locations, considered objects, etc. We use $ a,b,c,d,e,f,$ $g,h,x,y,z$ to range over $Name$.
\begin{definition}[Prefix]\label{def:prefix}
In the $\pi$-calculus, the capabilities for actions are expressed via the \emph{prefixes}  generated by, where $a,c,z \in Name$,
\[ \pi ::=\overline{a}c \mid a(z) \mid \tau \]
\end{definition}
\noindent The capability $\overline{a}c$ is to send the name
$c$  via the channel $a$,  $a(z)$  to receive any name  via the channel $a$, and $\tau$ is a capability for an unobservable action.
\begin{definition}[\citep{Sangiorgi2001pi-calculus}]\label{def:pi-calculus}
The \emph{process} terms in the $\pi$-calculus are generated by the BNF grammar below, where $z \in Name$,
\[\begin{array}{lll}
P &::=& SUM \mid (P \mid P) \mid (\nu z\, P )\mid \;( !\, P)  \\
SUM &::=&\mathbf{0} \mid (\pi.\,P ) \mid (SUM + SUM).
\end{array}\]
\end{definition}
\noindent We write $\mathcal{P}$ for the set of all processes, which is ranged over by $P,Q,X,Y,Z$. 

In processes of the forms $a(z).\,Q$ and $\nu z\, Q$, the name $z$ is bound with the scope $Q$. An occurrence of a name in a process is \emph{bound} if it lies within the scope of a binding occurrence of this name, and otherwise \emph{free}. 
We use the notation $fn(\mathbf{E})$ (or, $bn(\mathbf{E})$) to denote the set of all \emph{free} (\emph{bound}, resp.) names which occur in  the  entity $\mathbf{E}$ (e.g., process, action).

In the $\pi$-calculus, two process are said to be $\,\alpha$-\emph{convertible} if one may be gotten from another one by changing  \emph{bound} names. As usual, two $\alpha$-convertible processes are identified, and the entities of interest are the equivalence classes of processes  modulo the $\alpha$-convertibility.
\emph{Structural Operational Semantics} (SOS) was proposed by G. Plotkin~\citep{Plotkin2004astructuralapproachtooperationalsemantics}, which adopts a syntax oriented view on operational semantics and gives operational semantics
in logical style.
We below recall the SOS rules of the $\pi$-calculus which describe the \emph{transition relations} between process terms.

\begin{definition}[Action~\citep{Sangiorgi2001pi-calculus}]\label{def:action-p}
The \emph{actions} in the $\pi$-calculus are given by, with $a,c,z \in Name$,
\[ \alpha ::=\overline{a}c  \mid ac \mid \overline{a}(z) \mid \tau. \]
\end{definition}
\noindent We write $ Act$ for the set of all actions and use $\alpha,\beta,\gamma$ to range over actions.
The action $\overline{a}c$ ($ac$) is to send (receive, resp.) the name $c$ via the channel $a$, the \textit{bound-output} action $\overline{a}(z)$ is to send the bound name $z$ via  $a$, and $\tau$ is an invisible action.

Table~\ref{Ta2:SOS of process}~\citep{Sangiorgi2001pi-calculus} provides the SOS rules of the $\pi$-calculus, which  captures the capability of processes. There are four rules being elided from Table~\ref{Ta2:SOS of process}: the symmetric form (SUM-R) of (SUM-L), which has $Q+P$ in place of $P+Q$, and the symmetric forms (PAR-R), (CLOSE-R) and (COMM-R) of (PAR-L), (CLOSE-L) and (COMM-L), in which the roles of the left and right components are swapped.
The reader may refer to~\citep{Sangiorgi2001pi-calculus}  for their informal interpretations.
\begin{table}[h]
\caption{The SOS rules for processes}
\label{Ta2:SOS of process}
\setlength{\tabcolsep}{3pt}
\begin{tabular}{|p{337pt}|}
\hline
$\text{ }$\\
 $\begin{array}{llll}
   (\text{OUT-name}) & \frac{ }{\overline{a} c.\,P \stackrel{\overline{a} c}{\longrightarrow}_p P} &  \qquad \;  (\text{TAU}) & \frac{ }{\tau.\,P \stackrel{\tau}{\longrightarrow}_p P }\\
   &&&\\
   (\text{IN-name}) & \frac{ }{a(z).\,P \stackrel{ac}{\longrightarrow}_p P \{c/z\}} &  \qquad \;(\text{SUM-L}) & \frac{P \stackrel{\alpha}{\longrightarrow}_p P'}{P+Q \stackrel{\alpha}{\longrightarrow}_p P'} \\
   &&&
   \end{array}$
   \noindent $\begin{array}{lll}
   (\text{PAR-L}) & \frac{P \stackrel{\beta}{\longrightarrow}_p P'}{P \mid Q \stackrel{\beta}{\longrightarrow}_p P' \mid Q}  & \quad bn(\beta) \cap fn(Q)= \emptyset \\
   &&\\
   (\text{RES}) & \frac{P \stackrel{\beta}{\longrightarrow}_p P'}{\nu z\, P \stackrel{\beta}{\longrightarrow}_p \nu z\, P'} & \quad z \notin na(\beta)  \\
   &&\\
   (\text{OPEN})&  \frac{P \stackrel{\overline{a} z}{\longrightarrow}_p P'}{\nu z\, P \stackrel{\overline{a} (z)}{\longrightarrow}_p P'} &\quad z \neq a\\
   \end{array}$

   \noindent $\begin{array}{lll}
&&\\
   (\text{CLOSE-L}) & \frac{P \stackrel{\overline{a} (z)}{\longrightarrow}_p P' \quad\quad Q \stackrel{a z}{\longrightarrow}_p Q'}{P\mid Q \stackrel{\tau}{\longrightarrow}_p \nu z\, (P' \mid Q')} &\quad z \notin fn(Q) \\
   &&\\
  (\text{COMM-L}) & \frac{P \stackrel{\overline{a} c}{\longrightarrow}_p P' \quad\quad Q \stackrel{a c\;\;}{\longrightarrow_p} Q'}{P \mid Q \stackrel{\tau\;\;}{\longrightarrow_p} P' \mid Q'} &\\
  &&\\
    (\text{REP-ACT})& \frac{P \stackrel{\beta}{\longrightarrow}_p P'}{!\,P \stackrel{\beta}{\longrightarrow}_p P' \mid\, !\,P}   \\
&&\\
(\text{REP-COMM}) &\frac{P \stackrel{\overline{a} c\;\;}{\longrightarrow_p} P' \qquad P \stackrel{a c}{\longrightarrow}_p P''}{!\,P \stackrel{\tau}{\longrightarrow}_p (P' \mid P'') \mid \,!\,P}&\\
   &&\\
   (\text{REP-CLOSE})& \frac{P \stackrel{\overline{a} (z)}{\longrightarrow}_p P' \quad\quad P \stackrel{az}{\longrightarrow}_p P''}{!\,P \stackrel{\tau}{\longrightarrow}_p (\nu z\,(P' \mid P'')) \mid\, !\,P} &\quad z \notin fn(P)
  \end{array}$\\
  $\text{ }$\\
\hline
\end{tabular}
\end{table}

\begin{definition}[LTS of processes~\citep{Sangiorgi2001pi-calculus}]\label{def:LTSp}
The \emph{labelled transition system (LTS) of processes} is a triple
$\langle \mathcal{P}, Act, \longrightarrow_p \rangle$, in which
$\longrightarrow_p \subseteq \mathcal{P} \times Act \times \mathcal{P}$
is the transition relation defined by
$ \longrightarrow_p \triangleq \{\langle Q, \beta,Q'\rangle :\,  Q \stackrel{\beta}{\longrightarrow}_p Q' \text{ has a proof tree}\} $.
\end{definition}
\subsection{Action model logic}\label{subsec:AML}
In this subsection, we will recall the language $\mathcal{L}_{am}$ of the well-known action model logic (AML)~\citep{Ditmarsch2007dynamicepistemiclogic,French2014composablelanguageforactionmodels} which augments the standard modal language by adding action model operators.
\begin{definition}[Kripke model]\label{def:Kripke model}
A \emph{Kripke model} $M$ is a triple $\langle W^{M},R^{M}, V^{M}\rangle$ where

\noindent $\bullet\;$ $W^{M}$ is a non-empty set of states, which is ranged over by  $s,t,u,v,w$,

\noindent $\bullet\;$ $R^{M}:Agent \rightarrow 2^{W^{M} \times W^{M}}$
is an accessibility function assigning a binary

\noindent $\text{ }\;\,$ relation $R^{M}_C \subseteq W^{M} \times W^{M}$ to each agent $C \in Agent$, and

\noindent $\bullet\;$ $V^{M}:Atom  \rightarrow 2^{W^{M}}$ is a valuation function.

\noindent A pair $(M,w)$ with $w\in W^{M}$ is said to be a pointed Kripke model.
\end{definition}
For any binary relation $R\subseteq W_1\times W_2$ and $w\in W_1$, $R(w)\triangleq\{w'\in W_2 \mid  w R w'\}$.
\begin{definition}[Action model]\label{def:Action model}
Let $\mathcal{L}_0$ be a logical language. An \emph{action model} $\verb"M"$ is a triple $\langle \verb"W"^{\verb"M"},\verb"R"^{\verb"M"}, \verb"pre"^{\verb"M"}\rangle$ where

\noindent $\bullet\;$ $\verb"W"^{\verb"M"}$ is a non-empty finite set of action points, which is ranged over by $\mathbf{s},\mathbf{t},\mathbf{u},\mathbf{v},\mathbf{w}$,


\noindent $\bullet\;$ $\verb"R"^{\verb"M"}:Agent \rightarrow 2^{\verb"W"^{\verb"M"} \times \verb"W"^{\verb"M"}}$
is an accessibility function assigning a binary relation

\noindent $\text{ }\;\,$  $\verb"R"^{\verb"M"}_C \subseteq \verb"W"^{\verb"M"} \times \verb"W"^{\verb"M"}$ to each agent $C \in Agent$, and

\noindent $\bullet\;$ $\verb"pre"^{\verb"M"}:\verb"W"^{\verb"M"} \rightarrow \mathcal{L}_0$ is a precondition function  assigning a precondition

\noindent $\text{ }\;\,$  $\verb"pre"^{\verb"M"} (\mathbf{v}) \in \mathcal{L}_0$ to each action point $\mathbf{v} \in \verb"W"^{\verb"M"}$.
he precondition of an action point

\noindent $\text{ }\;\,$  T is considered as the necessary condition for this action to happen.

\noindent A pointed action model is a pair $(\verb"M",\mathbf{v})$ where $\mathbf{v}\in \verb"W"^{\verb"M"}$ denotes the \emph{actual} action point.
\end{definition}
Some examples of action models that realize different kinds of behavioural aims may be found in~\citep{Baltag1998PAandCKandPS,Benthem2006logicofccommunicationsandchange,Plaza2007publiccommunicationlogic}.
\begin{definition}[Language $\mathcal{L}_{am}$~\citep{Ditmarsch2007dynamicepistemiclogic}]\label{def:language of action model logic}
 The language $\mathcal{L}_{am}$ of \emph{action model logic} is generated by BNF grammar as follows.
 \[ \varphi ::=r\mid (\neg\varphi) \mid (\varphi_1 \wedge \varphi_2) \mid (\Box_C \varphi )\mid  ([\,\verb"M",\mathbf{v}\,]\, \varphi). \]
Here, $r\in Atom $, $C\in Agent$ and $(\verb"M",\mathbf{v}) $ is a pointed action model such that $\verb"pre"^{\verb"M"} (\mathbf{w}) \in \mathcal{L}_{am}$ for each $\mathbf{w} \in \verb"W"^{\verb"M"}$.
 The modal operator $\diamondsuit_A$ and propositional connectives $\vee$, $\rightarrow$, $\leftrightarrow$, $\top$ and $\bot$ are used in the standard manner. Moreover, we often write $\langle\, \verb"M",\mathbf{v}\,\rangle \varphi$ for $\neg [\,\verb"M",\mathbf{v}\,] \neg\varphi$.
\end{definition}
The formula $\Box_C \varphi$ represents that  the agent $C$ \emph{knows} $\varphi$.
The formula $[\,\verb"M",\mathbf{v}\,]\, \varphi$ expresses that $\varphi$ holds after executing the action denoted by the action model $(\verb"M",\mathbf{v})$.
\begin{definition}[Semantics~\citep{Ditmarsch2007dynamicepistemiclogic}]\label{def:semantics of action model logic}
 Given a Kripke model $M$, the notion of a formula $\varphi \in \mathcal{L}_{am}$ being satisfied in $M$ at a state $s$ is defined inductively as follows
\[\begin{array}{lll}
  M,s \models r & \text{ iff } & s\in V^{M}(r), \;\text{ where } r \in Atom  \\
  M,s \models \neg \varphi & \text{ iff } & M,s \nvDash \varphi\\
  M,s \models \varphi_1 \wedge \varphi_2  & \text{ iff } & M,s \models \varphi_1 \text{ and }  \,M,s \models \varphi_2 \\
  M,s \models \Box_C \varphi & \text{ iff } & \text{for each } t \in R^{M}_C (s),\;\; M,t \models \varphi\\
  M,s \models [\,\verb"M",\mathbf{v}\,] \varphi & \text{ iff } & M,s \models \verb"pre"^{\verb"M"} (\mathbf{v}) \;\text{ implies }
 M \otimes \verb"M", \langle s,\mathbf{v}\rangle \models \varphi.
\end{array}\]

\noindent Here, we define $\models$ and $(M,s) \otimes (\verb"M",\mathbf{v})$ simultaneously. $(M,s) \otimes (\verb"M",\mathbf{v})$ is the result after executing $(\verb"M",\mathbf{v})$ at $(M,s) $, which is defined as
\[(M,s) \otimes(\verb"M",\mathbf{v})\triangleq  \begin{cases}
     (M \otimes\verb"M", \langle s,\mathbf{v}\rangle)  & \text{    if } \;M,s \models \verb"pre"^{\verb"M"} (\mathbf{v})\\
     ( M,s) & \text{    otherwise},
    \end{cases}\]
where $\,M \otimes \verb"M" \triangleq \langle W,R,V\rangle$ is the Kripke model defined by, ~for each $C\in Agent$ and $r\in Atom $,
\[\begin{array}{lll}
 W &\triangleq &\{\langle u,\mathbf{u}\rangle :\, u \in W^{M}, \mathbf{u} \in \verb"W"^{\verb"M"} \text{ and } M,u \models \verb"pre"^{\verb"M"} (\mathbf{u})\}\\
R_C  &\triangleq & \{\langle \langle u,\mathbf{u}\rangle, \,\langle v,\mathbf{v}\rangle\rangle \in W \times W :\,  u R^{M}_C v \text{ and } \mathbf{u} \verb"R"^{\verb"M"}_C \mathbf{v}\,\}\\
V(r) &\triangleq  &\{\langle u,\mathbf{u}\rangle \in W :\,  u \in V^{M} (r)\,\}.
\end{array}\]
\end{definition}

If $M,s \models\verb"pre"^{\verb"M"} (\mathbf{v})$, $(\verb"M",\mathbf{v})$ is said to be \emph{executable} at $(M,s)$.
As usual, for every $\varphi \in \mathcal{L}_{am}$, $\varphi$ is valid, in symbols $\models\varphi$, if ~$M,s \models \varphi$ for every pointed Kripke model $(M,s)$.

 \begin{definition}[Bisimilarity of Kripke models]\label{def:Kripke model bisimulation}
 Given two models $M=\langle W, R,  V \rangle$ and $M'=\langle W', R',  V' \rangle$, a binary relation $\mathcal{Z} \subseteq W\times W'$ is a \emph{bisimulation}  between $M$ and $M'$ if, for each pair $\langle s,s' \rangle$ in $\mathcal{Z}$ and $C \in Agent$,\\
 \noindent \textbf{(atoms)} $\text{ } \; s \in V(r) \text{ iff } \,s' \in V'(r) \text{ for each } r\in Atom $;\\
 $\text{\textbf{(forth)}} \quad \text{ for each }  t\in W, \;\, s R_C t \text{ implies }  s' R'_C t' \text{ and } t \mathcal{Z} t'\text {for some } t'\in W'$;\\
 $\text{\textbf{(back)}} \quad \,\text{ for each }  t'\in W', \;\, s' R'_C t' \text{ implies } s R_C t \text{ and } t \mathcal{Z} t'\text{ for some } t\in W$.

 \noindent We say that $(M,s)$ and $(M',s')$ are bisimilar, in symbols $(M,s) \underline{\leftrightarrow} (M',s')$,
 if there exists a bisimulation relation between $M$ and $M'$ which links $s$ and $s'$. We often write $\mathcal{Z}: (M,s) \underline{\leftrightarrow}  (M',s')$ to indicate that $\mathcal{Z}$ is a bisimulation relation such that $s \mathcal{Z} s'$.
 \end{definition}
 As usual, the relation $\underline{\leftrightarrow} $ is an equivalence relation. It is well known that $\mathcal{L}_{am}$-satisfiability is invariant under bisimulations~\citep{Ditmarsch2007dynamicepistemiclogic}, formally,
 \begin{proposition}[\citep{Ditmarsch2007dynamicepistemiclogic}]\label{prop:bisi invariance}
  If $(M,s) \underline{\leftrightarrow} (N,t)$ then
  \[M,s\models \varphi\;\; \text{ iff }\; \;\;N,t\models \varphi  \quad \text{ for each } \varphi \in \mathcal{L}_{am}.\]
\end{proposition}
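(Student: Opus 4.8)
The plan is to proceed by induction on a complexity measure $c(\varphi)$ of the formula $\varphi$ rather than on its raw syntactic structure. The reason is that in the case $\varphi=[\,\verb"M",\mathbf{v}\,]\psi$ the argument will need the induction hypothesis not only for $\psi$ but also for each precondition $\verb"pre"^{\verb"M"}(\mathbf{w})$, and these preconditions are not subformulas of $\varphi$ in the ordinary sense. I would therefore fix $c$ so that $c(r)=1$, so that $c$ adds one across $\neg$ and $\Box_C$, so that $c(\varphi_1\wedge\varphi_2)=1+\max(c(\varphi_1),c(\varphi_2))$, and so that $c([\,\verb"M",\mathbf{v}\,]\psi)=1+c(\psi)+\sum_{\mathbf{w}} c(\verb"pre"^{\verb"M"}(\mathbf{w}))$, where $\mathbf{w}$ ranges over the finite set $\verb"W"^{\verb"M"}$. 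Since $\verb"W"^{\verb"M"}$ is finite this is well-defined, and it guarantees that both $\psi$ and every precondition $\verb"pre"^{\verb"M"}(\mathbf{w})$ have strictly smaller complexity than $[\,\verb"M",\mathbf{v}\,]\psi$. Fixing a bisimulation $\mathcal{Z}:(M,s)\,\underline{\leftrightarrow}\,(N,t)$, the induction hypothesis I would carry is that for every pair $\langle u,u'\rangle\in\mathcal{Z}$ and every $\chi$ with $c(\chi)<c(\varphi)$ one has $M,u\models\chi$ iff $N,u'\models\chi$.

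The base case and the Boolean cases are immediate: the atomic case follows directly from the \textbf{(atoms)} clause in the definition of bisimilarity, and the cases $\neg$ and $\wedge$ follow from the induction hypothesis applied to the strictly smaller constituents. For $\varphi=\Box_C\psi$ I would use \textbf{(forth)} and \textbf{(back)}: assuming $M,s\models\Box_C\psi$ and given any witness $t'$ with $t\,R'_C\,t'$, the \textbf{(back)} clause yields some $u$ with $s\,R^M_C\,u$ and $u\,\mathcal{Z}\,t'$, whence $M,u\models\psi$ gives $N,t'\models\psi$ by the induction hypothesis on $\psi$; the converse direction is symmetric via \textbf{(forth)}.

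The crux is the action model case $\varphi=[\,\verb"M",\mathbf{v}\,]\psi$. Unfolding the semantics, $M,s\models\varphi$ means that $M,s\models\verb"pre"^{\verb"M"}(\mathbf{v})$ implies $M\otimes\verb"M",\langle s,\mathbf{v}\rangle\models\psi$, and symmetrically for $N,t$. Since $c(\verb"pre"^{\verb"M"}(\mathbf{v}))<c(\varphi)$ and $s\,\mathcal{Z}\,t$, the induction hypothesis gives $M,s\models\verb"pre"^{\verb"M"}(\mathbf{v})$ iff $N,t\models\verb"pre"^{\verb"M"}(\mathbf{v})$, so the two implications have equivalent antecedents and it remains only to match their consequents when the precondition holds. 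For this I would establish the key lemma that product update preserves bisimilarity, by showing that
\[\mathcal{Z}^{\otimes}\triangleq\{\langle\langle u,\mathbf{u}\rangle,\langle u',\mathbf{u}\rangle\rangle : u\,\mathcal{Z}\,u',\ \mathbf{u}\in\verb"W"^{\verb"M"},\ \langle u,\mathbf{u}\rangle\in W_{M\otimes\verb"M"},\ \langle u',\mathbf{u}\rangle\in W_{N\otimes\verb"M"}\}\]
is a bisimulation between $M\otimes\verb"M"$ and $N\otimes\verb"M"$ linking $\langle s,\mathbf{v}\rangle$ and $\langle t,\mathbf{v}\rangle$. Clause \textbf{(atoms)} is immediate from the definition of $V$ in the product together with \textbf{(atoms)} for $\mathcal{Z}$. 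For \textbf{(forth)} and \textbf{(back)} I would decompose any product edge out of $\langle u,\mathbf{u}\rangle$ into its $M$-component (an $R^M_C$-edge) and its $\verb"M"$-component (an $\verb"R"^{\verb"M"}_C$-edge), transport the $M$-component along $\mathcal{Z}$ using \textbf{(forth)}/\textbf{(back)}, keep the $\verb"M"$-component fixed, and then verify that the resulting pair is genuinely a world of the target product; this last check is exactly where the induction hypothesis on the strictly smaller precondition $\verb"pre"^{\verb"M"}(\mathbf{u})$ enters, since membership in $W_{N\otimes\verb"M"}$ requires $N,u'\models\verb"pre"^{\verb"M"}(\mathbf{u})$, which follows from $M,u\models\verb"pre"^{\verb"M"}(\mathbf{u})$ via $u\,\mathcal{Z}\,u'$.

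Finally, with $\mathcal{Z}^{\otimes}:(M\otimes\verb"M",\langle s,\mathbf{v}\rangle)\,\underline{\leftrightarrow}\,(N\otimes\verb"M",\langle t,\mathbf{v}\rangle)$ established and $c(\psi)<c(\varphi)$, the induction hypothesis applied to $\psi$ on the two product models yields $M\otimes\verb"M",\langle s,\mathbf{v}\rangle\models\psi$ iff $N\otimes\verb"M",\langle t,\mathbf{v}\rangle\models\psi$, which matches the two consequents and closes the case. The main obstacle I anticipate is precisely the intertwining of the two inductions in the product-update lemma: because $\models$ and $\otimes$ are defined simultaneously, one must ensure that the complexity measure genuinely decreases when passing to preconditions, so that the appeal to the induction hypothesis inside the \textbf{(forth)}/\textbf{(back)} verification is legitimate rather than circular.
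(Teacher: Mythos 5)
The paper does not actually prove this proposition: it is imported verbatim from the cited literature, so there is no in-paper argument to compare against. Your proof is the standard one and is essentially correct. The complexity measure that charges an action-model operator for $\psi$ together with every precondition $\verb"pre"^{\verb"M"}(\mathbf{w})$ is exactly the device needed to break the apparent circularity coming from the simultaneous definition of $\models$ and $\otimes$, and your key lemma (product update preserves bisimilarity) is precisely the paper's Proposition~\ref{prop:action execution preserves bisimilarity}, which the paper likewise cites without proof. Two small points to tighten. First, as stated your induction hypothesis is relativized to the fixed bisimulation $\mathcal{Z}$ between $M$ and $N$, yet in the action-model case you apply it to $\psi$ across the product models $M\otimes\verb"M"$ and $N\otimes\verb"M"$ along the new relation $\mathcal{Z}^{\otimes}$; the hypothesis must therefore be quantified over \emph{all} pointed models and all bisimulations linking them, with only the formula complexity decreasing. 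Second, in the \textbf{(forth)}/\textbf{(back)} verification the precondition that has to be transported along $\mathcal{Z}$ is that of the \emph{successor} action point (the $\verb"M"$-component of the target pair), not $\verb"pre"^{\verb"M"}(\mathbf{u})$ as written; this is only a notational slip. With those repairs the argument is complete.
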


\section{The e-calculus}\label{sec:e-calculus}
This section presents the e-calculus, which enriches the well-known $\pi$-calculus by adding epistemic interactions.
We will give the syntax and SOS rules of the e-calculus.

Let $Agent$ be a finite set of agents, which is ranged over by $ A,B,C,D$. Let $Atom$ be a set of propositional letters (\emph{basic facts}), which is ranged over by $ p,q,r,\chi$.
\subsection{Syntax}\label{subsec:e-syntax}
The terms of the e-calculus include processes and e-systems. A finite number of agents form an e-system and each agent in an e-system is associated with a process which controls its behaviour.

The e-calculus augments the prefixes of the $\pi$-calculus by adding the ones for the capabilities of sending and receiving a \emph{basic fact}, which are of the forms: $\overline{a}q$ and $a(\chi)$ with $a \in Name$ and $\chi, q \in Atom$.
The capability $\overline{a}q$ is to send the fact $q$ via the channel $a$, and $a(\chi)$ to receive any fact via the channel $a$. Except that these new prefixes are introduced,
the \emph{process terms} in the e-calculus coincide with the ones in the $\pi$-calculus. 
\begin{definition}\label{def:e-calculus}
The \emph{pseudo e-systems} are defined by the BNF grammar below
\[ H ::=([\,P\,]_A) \mid (H\parallel H) \mid  (\nu z\, H) \]
where $A \in Agent$, $z \in Name$ and $P$ is a \emph{process term}. An \emph{e-system} in the e-calculus is a pseudo e-system in which any agent  occurs at most once.
\end{definition}
\noindent We write $\mathcal{S}$ for the set of all e-systems and use $G$ and $H $ to range over e-systems.
Here, the e-system $[\,P\,]_A$ is composed of only one \emph{agent} $A$, which is related with the (\emph{control}) process $P$. The symbols ~$\parallel$ and $\nu$ express the \emph{composition} and \emph{restriction} operators over e-systems respectively. Their functions analogize  those of the corresponding operators on  processes. In particular, for the e-system of the form $\nu z\, H$, the name $z$ is \emph{bound} with the scope $H$.

For e-systems,  bound names (propositional letters) and the $\alpha$-convertibility w.r.t.
 bound names and propositional letters can be analogously defined as the corresponding notions for processes.
\subsection{Actions, labels and epistemic states}\label{subsec:epistemic actions}
Compared with the $\pi$-calculus, the e-calculus adds two actions for transmitting basic facts, which are of the forms $\overline{a}q$ and $aq$ with $a\in Name$ and $q \in Atom$. 
The action $\overline{a}q$ ($aq$) is to send (receive, resp.) the fact $q$ via the channel $a$, and the other actions have the same meaning as the corresponding ones in the $\pi$-calculus.
\begin{definition}[Label]\label{def:action}
The set of all \emph{labels} in the e-calculus, denoted by $Label$, is defined as
\begin{multline*}
Label \;\;\triangleq \;\;\{\overline{a}c, \,\overline{a}(z),\, ac,\,\tau \,:\, a,c,z \in Name \}\;\cup\\
\{\langle \overline{a}q,A \rangle,\, \langle aq,A \rangle,\,\langle a,q,A,B\rangle\,:\, \begin{subarray}{1} a\in Name,\,A,B\in Agent,\\A\neq B \text{ and } q\in Atom \end{subarray}\,\}.
\end{multline*}
\end{definition}
\noindent We use $l$  to range over labels.
The label $\langle \overline{a} q,A\rangle $ ($\langle a q,A\rangle $) means that the action $\overline{a}q$ ($aq$, resp.) is executed by the agent $A$, while the label $\langle a,q,A,B\rangle $ describes the epistemic interaction  in which the agent $A$ sends the fact $q$ to the agent $B$ via the channel $a$.

We apply the well-known Action model logic (AML) to illustrate agents' epistemic environment, in which, as usual, we use Kripke  models to represent  epistemic states and construct action models to describe epistemic actions. Thus, applying such action models on epistemic states (Kripke models) through the operator $\otimes$ (see Definition~\ref{def:semantics of action model logic}), the corresponding epistemic actions will cause the evolution of epistemic states of agents.


Now we define the action models for receiving and interacting  basic facts as follows.
\begin{definition}[Action model for receiving facts]\label{def: K-plus-action model}
Let $B\in Agent$ and $q\in Atom$. The action model $\verb"M"^{q,B} $  with $\mathbf{s}$ as the actual action point is defined as $\verb"M"^{q,B} \triangleq \langle\, \{\mathbf{s},\mathbf{s}_\mathbf{q},\mathbf{s}_\mathbf{\top}\},\verb"R", \verb"pre"\,\rangle$ with
$\verb"pre"(\mathbf{s})= \verb"pre"(\mathbf{s}_\mathbf{\top})\triangleq \top$, $\verb"pre"(\mathbf{s}_\mathbf{q}) \triangleq q$,
$\verb"R"_B \triangleq \{\langle \mathbf{s},\mathbf{s}_\mathbf{q}\rangle, \langle \mathbf{s}_\mathbf{q},\mathbf{s}_\mathbf{\top}\rangle, \langle \mathbf{s}_\mathbf{\top}, \mathbf{s}_\mathbf{\top}\rangle  \}$, and $\verb"R"_C \triangleq \{\langle \mathbf{s},\mathbf{s}_\mathbf{\top}\rangle, \langle \mathbf{s}_\mathbf{q},\mathbf{s}_\mathbf{\top}\rangle, \langle \mathbf{s}_\mathbf{\top}, \mathbf{s}_\mathbf{\top}\rangle  \}$ for each $C\in Agent-\{ B\}$. See Fig.~\ref{figure: ek1}.

\begin{figure}[h]
\setlength{\abovecaptionskip}{0.4cm}
\setlength{\belowcaptionskip}{-0.1cm}
\centering
\centerline{\includegraphics[scale=0.8]{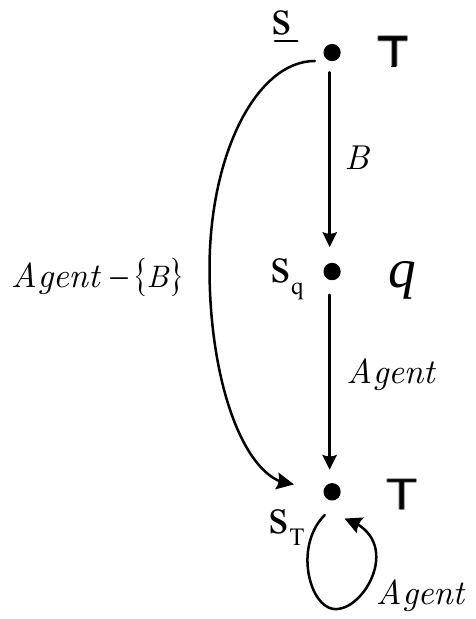}}
\caption{The action model for the agent $B$ receiving the fact $q$}\label{figure: ek1}
\end{figure}
\end{definition}
\begin{definition}[Action model for interacting facts]\label{def: K-star-action model}
Let $q\in Atom $ and $A,B\in Agent$ with $A\neq B$. The action model $\verb"M"^{q,A,B}$  with $\mathbf{s}$ as the actual action point is defined as
$\verb"M"^{q,A,B} \triangleq \langle\, \{\mathbf{s},\mathbf{s}_\mathbf{q},\mathbf{s}_\mathbf{\top}\},\verb"R", \verb"pre"\,\rangle$ with
$\verb"pre"(\mathbf{s}) \triangleq \Box_A q$, $\verb"pre"(\mathbf{s}_\mathbf{q}) \triangleq q$, $ \verb"pre"(\mathbf{s}_\mathbf{\top})\triangleq \top$, $\,\verb"R"_C \triangleq \{\langle \mathbf{s},\mathbf{s}_\mathbf{q}\rangle, \langle \mathbf{s}_\mathbf{q},\mathbf{s}_\mathbf{q}\rangle, \langle \mathbf{s}_\mathbf{\top}, \mathbf{s}_\mathbf{\top}\rangle  \}$ for each $C\in \{A,B \} $,
and $\verb"R"_C \triangleq \{\langle \mathbf{s},\mathbf{s}_\mathbf{\top}\rangle, \langle \mathbf{s}_\mathbf{q},\mathbf{s}_\mathbf{\top}\rangle, \langle \mathbf{s}_\mathbf{\top}, \mathbf{s}_\mathbf{\top}\rangle  \}$ for each $C\in Agent- \{A,B \}$. See Fig.~\ref{figure: ek2}.
\begin{figure}[h]
\setlength{\abovecaptionskip}{0.4cm}
\setlength{\belowcaptionskip}{-0.1cm}
\centering
\centerline{\includegraphics[scale=0.8]{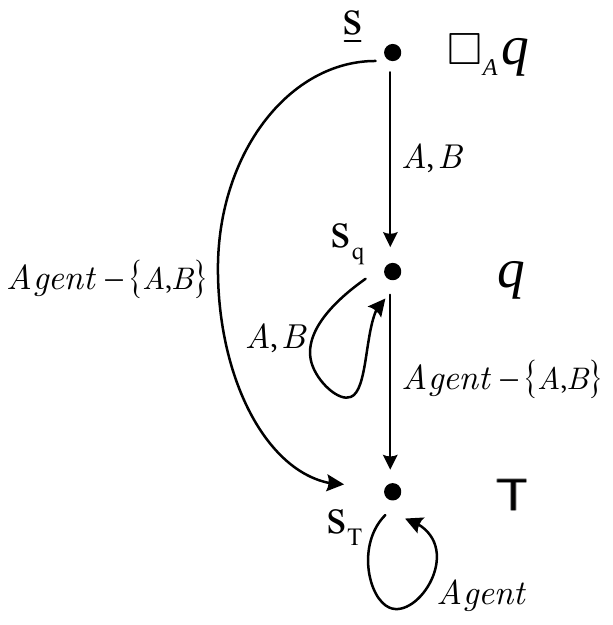}}
\caption{The action model  for passing the fact $q$ from the agent $A$ to $B$}\label{figure: ek2}
\end{figure}
\end{definition}
\subsection{SOS rules}\label{subsec:SOS}
The SOS rules of the e-calculus are divided into two parts: SOS$_{\text{p}}$ and SOS$_{\text{e}}$. The former captures the capability of \emph{processes} in the e-calculus, which coincides with the ones of the $\pi$-calculus (see, Table~\ref{Ta2:SOS of process}) except adding the rules (OUT-fact) and  (IN-fact) as follows:

$\begin{array}{ll}
 \text{(OUT-fact)} &\quad\frac{ }{\overline{a} q.\,P \stackrel{\overline{a} q}{\longrightarrow}_p P}   \\ \text{(IN-fact)} & \quad \frac{ }{a(\chi).\,P \stackrel{aq}{\longrightarrow}_p P \{q/ \chi\}}
  \end{array}$

\noindent The latter describes the behaviours of \emph{e-systems}, which are listed in Table~\ref{Ta1:SOSe}. Similar to Table~\ref{Ta2:SOS of process}, the rules (PAR$_{\text{e}}$-R), (CLOSE$_{\text{e}}$-R), (COMM$_{\text{e}}$-name-R) and (COMM$_{\text{e}}$-fact-R) are elided from Table~\ref{Ta1:SOSe}.


\begin{table}[h]
\caption{The SOS rules for the e-calculus (SOS$_{\text{e}}$)}
\label{Ta1:SOSe}
\setlength{\tabcolsep}{3pt}
\begin{tabular}{|p{337pt}|}
\hline
$\text{ }$\\
 $ \begin{array}{lllll}
    (\pi) & \frac{P \stackrel{\beta}{\longrightarrow}_{p} P'}{(M,s) \blacktriangleright [\,P\,]_A \stackrel{\beta}{\longrightarrow} (M,s) \blacktriangleright [\,P'\,]_A} \quad  \beta\in Act \text{ and } \beta  \neq \overline{a}q, aq  \\
    &&&&\\
     (\text{IN}_{\text{e}}) & \frac{P \stackrel{aq}{\longrightarrow}_{p} P' }{(M,s) \blacktriangleright [\,P\,]_A  \xlongrightarrow {\langle a q,A\rangle } (M,s)\otimes  (\texttt{M}^{q,A},\mathbf{s}) \blacktriangleright [\,P'\,]_A }\\
   &&&&\\
    (\text{OUT}_{\text{e}}) & \frac{ P \stackrel{\overline{a} q}{\longrightarrow}_{p} P'}{(M,s) \blacktriangleright [\,P\,]_A \xlongrightarrow {\langle \overline{a} q,A\rangle} (M,s) \blacktriangleright [\,P'\,]_A} \quad M,s \models \Box_A q\\
     &&&&\\
     (\text{PAR}_{\text{e}}\text{-L}) & \frac{(M,s) \blacktriangleright G \stackrel{l}{\longrightarrow} (M',s') \blacktriangleright G'}{(M,s) \blacktriangleright G \parallel H \stackrel{l}{\longrightarrow} (M',s') \blacktriangleright G' \parallel H} \quad bn(l) \cap fn(H)= \emptyset \\
   &&&&\\
    \end{array}$\\


    \noindent $ \begin{array}{lllll}
     (\text{OPEN}_{\text{e}}) & \frac{(M,s) \blacktriangleright G \stackrel{\overline{a} z}{\longrightarrow} (M,s) \blacktriangleright G'}{(M,s) \blacktriangleright \nu z\, G \stackrel{\overline{a} (z)}{\longrightarrow} (M,s) \blacktriangleright G'} \quad  z \neq a\\
      &&&&\\
     (\text{RES}_{\text{e}}) & \frac{(M,s) \blacktriangleright G \stackrel{l}{\longrightarrow} (M',s') \blacktriangleright G'}{(M,s) \blacktriangleright \nu z\, G \stackrel{l}{\longrightarrow} (M',s') \blacktriangleright \nu z\, G'} \quad  z \notin na(l) \text{ whenever }l\neq \langle a,q,A,B\rangle\\
   &&&&\\
   \end{array}$\\


   \noindent $\begin{array}{ll}
    (\text{CLOSE}_{\text{e}}\text{-L}) & \frac{(M,s) \blacktriangleright G \stackrel{\overline{a} (z)}{\longrightarrow} (M,s) \blacktriangleright G' \quad\quad (M,s) \blacktriangleright H \stackrel{a z}{\longrightarrow} (M,s) \blacktriangleright H'}{(M,s) \blacktriangleright G \parallel H \stackrel{\tau}{\longrightarrow} (M,s) \blacktriangleright \nu z\, (G' \parallel H')} \;\; z \notin fn(H) \\
   &\\
   (\text{COMM}_{\text{e}}\text{-name-L}) & \frac{(M,s) \blacktriangleright G \stackrel{\overline{a} c}{\longrightarrow} (M,s) \blacktriangleright G' \quad\quad (M,s) \blacktriangleright H \stackrel{a c}{\longrightarrow} (M,s) \blacktriangleright H'}{(M,s) \blacktriangleright G \parallel H \stackrel{\tau}{\longrightarrow} (M,s) \blacktriangleright G' \parallel H'} \\
   &\\
   (\text{COMM}_{\text{e}}\text{-fact-L}) &  \frac{(M,s) \blacktriangleright G \xlongrightarrow {\langle\overline{a} q,A\rangle} (M_1,s_1) \blacktriangleright G'\quad\quad (M,s) \blacktriangleright H \xlongrightarrow {\langle a q,B\rangle} (M_2,s_2) \blacktriangleright H'}{(M,s) \blacktriangleright G \parallel H \xlongrightarrow {\langle a,q ,A,B\rangle} (M,s)\otimes (\texttt{M}^{q,A,B},\mathbf{s}) \blacktriangleright G' \parallel H'} \\
   &\\
  \end{array}$

  $\text{ }$\\
\hline
\end{tabular}
\end{table}

In order to describe the effect of epistemic interactions, an e-system is always arranged to run at a given epistemic state when its behaviour is considered.
Naturally, the conclusions of the rules in SOS$_{\text{e}}$ have the following form.
\[ (M,s)\blacktriangleright G  \stackrel{l}{\longrightarrow} (N,t) \blacktriangleright H.\]
Intuitively, it says that, through the $l$-labelled transition, the e-system $G$ at the epistemic state $(M,s)$ may become $H$, and the epistemic state $(M,s)$ also evolves into $(N,t)$.


Based on the rules in SOS$_{\text{p}}$ and SOS$_{\text{e}}$, we may define the labelled transition system  of e-systems as Definition~\ref{def:LTSp}. 
Now we explain the rules in Table~\ref{Ta1:SOSe} intuitively. The rule  ($\pi$) reveals that, it only depends on the capability of the control process $P$ (not on the epistemic state $(M,s)$) for the agent $A$ to perform actions not concerned with \emph{basic facts}, and these actions also have no effect on epistemic states.
The rules (PAR$_{\text{e}}$-L), (RES$_{\text{e}}$), (OPEN$_{\text{e}}$), (CLOSE$_{\text{e}}$-L) and (COMM$_{\text{e}}$-name-L) are the variants of the corresponding ones in the $\pi$-calculus, which capture the operators $\parallel $ and $\nu$ over e-systems. From these rules, it is easy to see that the operators $\parallel $ and $\nu$ have the same operational semantic as the operators $| $ and $\nu$ respectively except that the former are applied over e-systems instead of processes.
The rule (OUT$_{\text{e}}$) means that an agent $A$ may send a fact $q$ at an epistemic state $(M,s)$ only if its control process $P$ has this capacity and it knows $q$ at $(M,s)$ (i.e., $M,s \models \Box_A q$). The rule (IN$_{\text{e}}$) says that the agent $A$ would know the fact $q$ after $A$ receives $q$ and the epistemic state $(M,s)$ will evolve into $(M,s)\otimes (\verb"M"^{q,A},\mathbf{s})$.
The rule (COMM$_{\text{e}}$-fact-L) illustrates the epistemic interaction between two agents in an e-system, in which the agent $A$ sends the fact $q$ via the channel $\overline{a}$, while $B$ receives $q$ via the channel $a$. After such handshaking, the epistemic state $(M,s)$ evolves into $(M,s)\otimes (\verb"M"^{q,A,B},\mathbf{s})$.

\section{Epistemic properties}\label{sec:epistemic propoties}
This section will devote itself to giving several useful properties about epistemic interactions under AML.
%

The first two propositions reveal that action model execution preserves the indistinguishability of epistemic states and the cognition of an agent for basic facts.
%
 \begin{proposition}[\citep{Ditmarsch2007dynamicepistemiclogic}]\label{prop:action execution preserves bisimilarity}
Let $M$ and $M'$ be two Kripke models and $\verb"M" $ be an action model. Then
\[(M,s) \underline{\leftrightarrow} (M',s') \,\text{ implies }\,(M,s) \otimes (\verb"M",\mathbf{s}) \underline{\leftrightarrow} (M',s') \otimes (\verb"M",\mathbf{s}).\]
\end{proposition}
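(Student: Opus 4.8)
The plan is to exhibit an explicit bisimulation between the two updated pointed models, reducing everything to the bisimulation invariance of $\mathcal{L}_{am}$-satisfiability (Proposition~\ref{prop:bisi invariance}). Fix a bisimulation $\mathcal{Z}: (M,s) \underline{\leftrightarrow} (M',s')$. First I would dispose of the non-executable case. If $(\verb"M",\mathbf{s})$ is not executable at $(M,s)$, i.e. $M,s \nvDash \verb"pre"(\mathbf{s})$, then since $\verb"pre"(\mathbf{s}) \in \mathcal{L}_{am}$ and $s\mathcal{Z} s'$, Proposition~\ref{prop:bisi invariance} gives $M',s' \nvDash \verb"pre"(\mathbf{s})$, so $(\verb"M",\mathbf{s})$ is not executable at $(M',s')$ either. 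By Definition~\ref{def:semantics of action model logic} both updates leave the underlying models unchanged, and the conclusion reduces to the hypothesis $\mathcal{Z}: (M,s) \underline{\leftrightarrow} (M',s')$.

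In the main case $M,s \models \verb"pre"(\mathbf{s})$, whence $M',s' \models \verb"pre"(\mathbf{s})$ again by Proposition~\ref{prop:bisi invariance}, so the two updated models are $(M \otimes \verb"M", \langle s,\mathbf{s}\rangle)$ and $(M' \otimes \verb"M", \langle s',\mathbf{s}\rangle)$. The candidate bisimulation between $M \otimes \verb"M"$ and $M' \otimes \verb"M"$ is
\[\mathcal{Z}' \triangleq \{\langle \langle u,\mathbf{u}\rangle, \langle u',\mathbf{u}\rangle\rangle :\, u\,\mathcal{Z}\, u' \text{ and both } \langle u,\mathbf{u}\rangle,\langle u',\mathbf{u}\rangle \text{ are states of the respective products}\},\]
that is, pairs that retain the same action point and whose world-components are $\mathcal{Z}$-related.

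Then I would verify the three clauses of Definition~\ref{def:Kripke model bisimulation}. The (atoms) clause is immediate, since by the product construction $V$ and $V'$ inherit valuations from $u$ and $u'$ alone, and $u\,\mathcal{Z}\, u'$ forces agreement on atoms. For (forth), given $\langle u,\mathbf{u}\rangle R_C \langle v,\mathbf{v}\rangle$, I unfold the product definition to $u R^{M}_C v$ and $\mathbf{u}\,\verb"R"_C\,\mathbf{v}$; applying the forth clause of $\mathcal{Z}$ yields $v'$ with $u' R'^{M'}_C v'$ and $v\,\mathcal{Z}\, v'$, and the matching transition $\langle u',\mathbf{u}\rangle R'_C \langle v',\mathbf{v}\rangle$ follows once $\langle v',\mathbf{v}\rangle$ is confirmed to be a genuine state of $M' \otimes \verb"M"$. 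The (back) clause is symmetric, and $\langle s,\mathbf{s}\rangle\,\mathcal{Z}'\,\langle s',\mathbf{s}\rangle$ holds because $s\,\mathcal{Z}\, s'$ and both composites exist.

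The main obstacle, and the only place genuine content enters, is checking that the witnessing pair $\langle v',\mathbf{v}\rangle$ lies in $W'$, i.e. $M',v' \models \verb"pre"(\mathbf{v})$. This is exactly where bisimulation invariance is invoked a second time: the original transition guarantees $M,v \models \verb"pre"(\mathbf{v})$ (otherwise $\langle v,\mathbf{v}\rangle$ would not be a product state), and since $\verb"pre"(\mathbf{v}) \in \mathcal{L}_{am}$ and $v\,\mathcal{Z}\, v'$, Proposition~\ref{prop:bisi invariance} transfers precondition satisfaction to $v'$. The one subtlety worth flagging is the simultaneous definition of $\models$ and $\otimes$ in Definition~\ref{def:semantics of action model logic}: preconditions may themselves contain action-model modalities, so one must ensure Proposition~\ref{prop:bisi invariance} applies to the relevant formulas; since that proposition is stated for all of $\mathcal{L}_{am}$, no circularity arises.
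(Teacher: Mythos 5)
The paper does not actually prove this proposition: it is stated with a citation to \citep{Ditmarsch2007dynamicepistemiclogic} and used as a known fact, so there is no in-paper proof to compare against. Your argument is the standard one from that source and is essentially correct: the non-executable case is dispatched by transferring failure of the precondition across the bisimulation, and in the executable case the relation pairing product states with the same action point and $\mathcal{Z}$-related world components is a bisimulation, the only substantive check being that the matching world $v'$ satisfies the precondition of $\mathbf{v}$, which Proposition~\ref{prop:bisi invariance} supplies. The one point you flag but slightly underplay is the logical order of the two propositions: in the cited source, bisimulation invariance for formulas containing action-model modalities and preservation of bisimilarity under product update are established together by a simultaneous induction on a complexity measure, precisely because each needs the other for smaller inputs. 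Since this paper takes Proposition~\ref{prop:bisi invariance} as given for all of $\mathcal{L}_{am}$, your use of it as a black box is legitimate here; a fully self-contained development would have to interleave the two inductions rather than invoke the invariance result wholesale.
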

 \begin{proposition}\label{prop:action execution preserves fact cognizability}
Let $M$  be a Kripke models and $\verb"M" $ be an action model.
Then for each $C\in Agent$ and $r \in Atom $,
\[M,s \models \Box_C r \;\;\text{ implies }\;\; (M,s) \otimes (\verb"M",\mathbf{s})\models \Box_C r.\]
\end{proposition}
\begin{proof}
Let $C \in Agent$, $r \in Atom$ and $M,s \models \Box_C r$. If $M,s\nvDash\verb"pre"^{\verb"M"}(\mathbf{s})$, then $(M,s) \otimes (\verb"M",\mathbf{s})=(M,s)$ and it holds trivially. Otherwise, it is enough to prove that  $M \otimes \verb"M",\langle s,\mathbf{s}\rangle \models \Box_C r$.  Assume $\langle u,\mathbf{u}\rangle \in R_C^{M \otimes\verb"M"}(\langle s,\mathbf{s}\rangle)$. Then, by Definition~\ref{def:semantics of action model logic}, we have $u \in R^{M}_C(s)$.
 From $M,s \models \Box_C r$ and $u \in R^{M}_C(s)$, it follows that $M,u \models r$, i.e., $u \in V^{M}(r)$, which implies $\langle u,\mathbf{u}\rangle \in V^{M \otimes \verb"M"}(r)$ by Definition~\ref{def:semantics of action model logic}. Hence $M \otimes \verb"M",\langle s,\mathbf{s}\rangle \models \Box_C r$.
\end{proof}
 \begin{proposition}\label{prop:precondition of composition of action models}
Let $\verb"M" $ be an action model. Then, for any propositional formula $\theta$,
 \[\models \verb"pre"^{\verb"M"}(\mathbf{v}) \wedge [\,\verb"M",\mathbf{v}\,]\,\theta \longleftrightarrow \verb"pre"^{\verb"M"}(\mathbf{v}) \wedge \theta.\]
\end{proposition}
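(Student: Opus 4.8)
The plan is to fix an arbitrary pointed Kripke model $(M,s)$ and show that the two sides of the biconditional hold at $(M,s)$ under exactly the same conditions; since a formula is valid iff it is satisfied at every pointed Kripke model, this suffices. Both conjunctions carry $\verb"pre"^{\verb"M"}(\mathbf{v})$ as a conjunct, so the first move is a case split on this precondition. If $M,s \nvDash \verb"pre"^{\verb"M"}(\mathbf{v})$, then both sides are false and the equivalence holds trivially. Hence I may assume throughout that $M,s \models \verb"pre"^{\verb"M"}(\mathbf{v})$, and it remains only to prove that, under this assumption, $M,s \models [\,\verb"M",\mathbf{v}\,]\,\theta$ iff $M,s \models \theta$.

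Next I would unfold the semantics of the action-model modality from Definition~\ref{def:semantics of action model logic}: by definition, $M,s \models [\,\verb"M",\mathbf{v}\,]\,\theta$ says that $M,s \models \verb"pre"^{\verb"M"}(\mathbf{v})$ implies $M \otimes \verb"M", \langle s,\mathbf{v}\rangle \models \theta$. Since we are already under the assumption $M,s \models \verb"pre"^{\verb"M"}(\mathbf{v})$, the antecedent is discharged and the condition reduces to $M \otimes \verb"M", \langle s,\mathbf{v}\rangle \models \theta$. The same assumption also guarantees, through the definition of the domain $W$ of $M\otimes\verb"M"$ in Definition~\ref{def:semantics of action model logic}, that $\langle s,\mathbf{v}\rangle$ is a genuine state of the product model, so the expression is well-formed. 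The whole proposition is thereby reduced to the single claim
\[ M \otimes \verb"M", \langle s,\mathbf{v}\rangle \models \theta \;\iff\; M,s \models \theta. \]

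The heart of the argument is that $\theta$ is a \emph{propositional} formula, so its truth at a state depends only on the atoms true there and never inspects any accessibility relation. I would establish the claim by induction on the structure of $\theta$. For the base case $\theta = r \in Atom$, the valuation clause $V(r) = \{\langle u,\mathbf{u}\rangle \in W : u \in V^{M}(r)\}$ of Definition~\ref{def:semantics of action model logic} gives at once that $\langle s,\mathbf{v}\rangle \in V(r)$ iff $s \in V^{M}(r)$, i.e. the two satisfactions agree on atoms. The Boolean cases $\neg\varphi$ and $\varphi_1 \wedge \varphi_2$ follow routinely from the induction hypothesis, since the propositional connectives are interpreted identically in the two models. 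Combining this claim with the reduction of the previous paragraph closes the proof.

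I expect no serious obstacle here. The only points requiring care are, first, performing the case distinction on $\verb"pre"^{\verb"M"}(\mathbf{v})$ \emph{before} unfolding the modality, so that the implication built into the semantics of $[\,\verb"M",\mathbf{v}\,]$ is correctly discharged; and second, the recognition that restricting $\theta$ to be propositional is precisely what makes the atom-level agreement between $(M,s)$ and $(M\otimes\verb"M",\langle s,\mathbf{v}\rangle)$ sufficient. Were $\theta$ permitted to contain modal operators $\Box_C$ or nested action-model operators, the altered accessibility relations $R_C$ of the product model would obstruct the induction, so the hypothesis that $\theta$ is propositional is essential rather than incidental.
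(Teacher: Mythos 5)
Your proposal is correct and follows essentially the same route as the paper: a chain of equivalences that discharges the precondition conjunct, unfolds the semantics of the action-model modality, and then uses the fact that a propositional formula holds at $\langle s,\mathbf{v}\rangle$ in $M\otimes\verb"M"$ iff it holds at $s$ in $M$. The only difference is that you make explicit (via induction on $\theta$) the atom-preservation step that the paper simply cites from the definition of the product model.
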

\begin{proof}
Let $(M,s)$ be a pointed Kripke model. Then

 $\quad M,s \models \verb"pre"^{\verb"M"}(\mathbf{v}) \wedge [\,\verb"M",\mathbf{v}\,]\,\theta$

\noindent $\begin{array}{ll}
    \text{iff} & M,s \models \verb"pre"^{\verb"M"}(\mathbf{v}) \text{ and } \; M \otimes\verb"M",\langle s,\mathbf{v}\rangle \models \theta \\
    \text{iff} & M,s \models \verb"pre"^{\verb"M"}(\mathbf{v}) \text{ and } \;  M,s \models \theta \qquad\qquad\qquad \qquad\quad \quad \;\text{(by Definition~\ref{def:semantics of action model logic}}\\
    &\text{ }\qquad\qquad\qquad\qquad\qquad\qquad\qquad\qquad\quad\,\text{and $\theta $ is a propositional formula)}\\
     \text{iff} & M,s \models \verb"pre"^{\verb"M"}(\mathbf{v}) \wedge \theta.  \qquad\qquad\qquad\qquad\qquad\qquad\qquad\qquad\qquad \qquad \quad\;  \qedhere
\end{array}$
\end{proof}
The following two results describe that after an agent receives a basic fact, this agent will know this fact.
\begin{proposition}\label{prop:succ1}
For any pointed Kripke model $(M,s) $,
\[(M,s) \otimes(\verb"M"^{q,B}, \mathbf{s})  \models \Box_B q,\]
where $(\verb"M"^{q,B}, \mathbf{s})$ is the pointed action model for the agent $B$ receiving the fact $q$.
\end{proposition}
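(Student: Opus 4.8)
The plan is to unfold the definition of $\otimes$ for the specific action model $\verb"M"^{q,B}$ and verify $\Box_B q$ at the resulting pointed model by a direct case analysis. First I would recall from Definition~\ref{def: K-plus-action model} that $\verb"M"^{q,B}$ has action points $\{\mathbf{s},\mathbf{s}_\mathbf{q},\mathbf{s}_\mathbf{\top}\}$ with $\verb"pre"(\mathbf{s})=\verb"pre"(\mathbf{s}_\mathbf{\top})=\top$ and $\verb"pre"(\mathbf{s}_\mathbf{q})=q$, and that $\verb"R"_B=\{\langle\mathbf{s},\mathbf{s}_\mathbf{q}\rangle,\langle\mathbf{s}_\mathbf{q},\mathbf{s}_\mathbf{\top}\rangle,\langle\mathbf{s}_\mathbf{\top},\mathbf{s}_\mathbf{\top}\rangle\}$. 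Since $\verb"pre"(\mathbf{s})=\top$, the point $\mathbf{s}$ is executable at every $(M,s)$, so by Definition~\ref{def:semantics of action model logic} we always land in the genuine product case: $(M,s)\otimes(\verb"M"^{q,B},\mathbf{s})=(M\otimes\verb"M"^{q,B},\langle s,\mathbf{s}\rangle)$. Thus there is no ``otherwise'' branch to worry about here, which simplifies matters compared with the general situation in Proposition~\ref{prop:action execution preserves fact cognizability}.

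Next I would show $M\otimes\verb"M"^{q,B},\langle s,\mathbf{s}\rangle\models\Box_B q$, i.e. that every $B$-successor of $\langle s,\mathbf{s}\rangle$ in the product satisfies $q$. By the definition of $R_B$ in the product (Definition~\ref{def:semantics of action model logic}), a pair $\langle u,\mathbf{u}\rangle$ is a $B$-successor of $\langle s,\mathbf{s}\rangle$ only if $\mathbf{s}\,\verb"R"_B\,\mathbf{u}$. From the description of $\verb"R"_B$ the only such action point is $\mathbf{u}=\mathbf{s}_\mathbf{q}$. Hence any $B$-successor has the form $\langle u,\mathbf{s}_\mathbf{q}\rangle$, and for this pair to lie in the product domain $W$ we need $M,u\models\verb"pre"(\mathbf{s}_\mathbf{q})=q$, i.e. $u\in V^M(q)$. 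By the definition of $V$ in the product, $u\in V^M(q)$ gives $\langle u,\mathbf{s}_\mathbf{q}\rangle\in V^{M\otimes\verb"M"^{q,B}}(q)$, so $M\otimes\verb"M"^{q,B},\langle u,\mathbf{s}_\mathbf{q}\rangle\models q$. Since $q\in Atom$ is a basic fact, this atomic satisfaction is immediate and requires no recursion on formula structure.

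The argument therefore reduces to the single observation that the structure of $\verb"M"^{q,B}$ forces every $B$-accessible action point from $\mathbf{s}$ to carry the precondition $q$, and the product construction then copies that precondition into the valuation of the successor worlds. I would conclude by noting that $\langle s,\mathbf{s}\rangle$ itself belongs to $W$ because $\verb"pre"(\mathbf{s})=\top$ is trivially satisfied, so the pointed model is well-defined and the verification of $\Box_B q$ at it completes the proof. The only mild subtlety — and the one point I would state carefully rather than gloss over — is the interplay between the executability of the actual point $\mathbf{s}$ (guaranteeing we are in the product case) and the filtering by $\verb"pre"(\mathbf{s}_\mathbf{q})=q$ on the successor side (guaranteeing the successors actually validate $q$); everything else is a routine reading-off of the definitions.
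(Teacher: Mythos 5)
Your argument is correct and matches the paper's own proof essentially step for step: executability of the actual point $\mathbf{s}$ because its precondition is $\top$, then the observation that every $B$-successor of $\langle s,\mathbf{s}\rangle$ in the product must have action component $\mathbf{s}_\mathbf{q}$, whose precondition $q$ forces $u\in V^{M}(q)$ and hence atomic satisfaction at the successor. No differences worth noting.
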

\begin{proof}
Let $(M,s) $ be a pointed Kripke model. By Definition~\ref{def: K-plus-action model}, $M,s \models  \verb"pre"^{ \verb"M"^{q,B }} (\mathbf{s} )$ due to $\verb"pre"^{ \verb"M"^{q,B }} (\mathbf{s} )=\top$. So $(M,s) \otimes(\verb"M"^{q,B}, \mathbf{s}) =(M \otimes\verb"M"^{q,B}, \langle s, \mathbf{s} \rangle)$ by Definition~\ref{def:semantics of action model logic}. Let $\langle u,\mathbf{u} \rangle \in R_B^{M \otimes \verb"M"^{q,B }}(\langle s,\mathbf{s} \rangle)$. Then, by Definition~\ref{def:semantics of action model logic}, $u \in R_B^{M}(s)$, $\mathbf{u} \in \verb"R"_B^{\verb"M"^{q,B }}(\mathbf{s} ) $ (i.e., $\mathbf{u}=\mathbf{s_q}$) and $M,u \models  \verb"pre"^{ \verb"M"^{q,B }} (\mathbf{u} )$. Thus $M,u \models q$ due to $\verb"pre"^{ \verb"M"^{q,B }} (\mathbf{u} )=q$ by Definition~\ref{def: K-plus-action model}. That is, $ u \in V^{M }(q)$, and so it holds that $\langle u,\mathbf{u} \rangle \in V^{M \otimes \verb"M"^{q,B }}(q)$. Hence $M \otimes\verb"M"^{q,B}, \langle s, \mathbf{s} \rangle \models \Box_B q$.
\end{proof}
\begin{proposition}\label{prop:succ2}
 $\text{If } M,s \models \Box_A q \text{ then }  $
 \[(M,s) \otimes(\verb"M"^{q,A,B}, \mathbf{s} ) \models \Box_B q,\]
  where $(\verb"M"^{q,A,B}, \mathbf{s})$ is the pointed action model for passing $q$ from the agent $A$ to $B$.
\end{proposition}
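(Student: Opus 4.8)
The plan is to follow the pattern of the proof of Proposition~\ref{prop:succ1}, since the action model $\verb"M"^{q,A,B}$ of Definition~\ref{def: K-star-action model} has almost the same shape as $\verb"M"^{q,B}$; the sole essential difference is that the actual action point $\mathbf{s}$ now carries the precondition $\Box_A q$ rather than $\top$, which is exactly where the hypothesis $M,s \models \Box_A q$ will be consumed.

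First I would establish executability. Since $\verb"pre"^{\verb"M"^{q,A,B}}(\mathbf{s}) = \Box_A q$, the hypothesis $M,s \models \Box_A q$ gives $M,s \models \verb"pre"^{\verb"M"^{q,A,B}}(\mathbf{s})$, so by Definition~\ref{def:semantics of action model logic} we have $(M,s) \otimes (\verb"M"^{q,A,B},\mathbf{s}) = (M \otimes \verb"M"^{q,A,B}, \langle s,\mathbf{s}\rangle)$. It therefore suffices to show $M \otimes \verb"M"^{q,A,B}, \langle s,\mathbf{s}\rangle \models \Box_B q$.

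Next I would inspect the $B$-successors of $\langle s,\mathbf{s}\rangle$ in the product. Take any $\langle u,\mathbf{u}\rangle \in R_B^{M \otimes \verb"M"^{q,A,B}}(\langle s,\mathbf{s}\rangle)$. By the product construction this demands $u \in R_B^M(s)$ and $\mathbf{s}\,\verb"R"_B^{\verb"M"^{q,A,B}}\,\mathbf{u}$. Here the key point is that $B \in \{A,B\}$, so the $B$-relation of the action model is $\verb"R"_B = \{\langle\mathbf{s},\mathbf{s}_\mathbf{q}\rangle, \langle\mathbf{s}_\mathbf{q},\mathbf{s}_\mathbf{q}\rangle, \langle\mathbf{s}_\mathbf{\top},\mathbf{s}_\mathbf{\top}\rangle\}$, whose only successor of $\mathbf{s}$ is $\mathbf{s}_\mathbf{q}$; hence $\mathbf{u} = \mathbf{s}_\mathbf{q}$. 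Membership of $\langle u,\mathbf{s}_\mathbf{q}\rangle$ in the world set of the product then forces $M,u \models \verb"pre"^{\verb"M"^{q,A,B}}(\mathbf{s}_\mathbf{q}) = q$, i.e. $u \in V^M(q)$, so that $\langle u,\mathbf{s}_\mathbf{q}\rangle \in V^{M \otimes \verb"M"^{q,A,B}}(q)$. As $\langle u,\mathbf{u}\rangle$ was an arbitrary $B$-successor, $M \otimes \verb"M"^{q,A,B}, \langle s,\mathbf{s}\rangle \models \Box_B q$ follows, completing the argument.

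I expect no real obstacle: the action model is designed so that every world $B$ can reach from $\langle s,\mathbf{s}\rangle$ is paired with the action point $\mathbf{s}_\mathbf{q}$, whose precondition $q$ is automatically satisfied by any such world. The only points needing care are bookkeeping ones --- recalling that $B$ lies in the informed group $\{A,B\}$ and so uses the accessibility relation routing $\mathbf{s}$ to $\mathbf{s}_\mathbf{q}$ (not to $\mathbf{s}_\mathbf{\top}$), and verifying that executability genuinely uses $M,s \models \Box_A q$, without which the product could collapse to $(M,s)$ and the conclusion could fail.
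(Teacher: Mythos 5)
Your proposal is correct and is precisely the argument the paper intends: the paper's own proof of this proposition is just the remark ``Similar to Proposition~\ref{prop:succ1}'', and you have carried out that adaptation faithfully, correctly identifying that the only new ingredient is using the hypothesis $M,s \models \Box_A q$ to discharge the precondition $\verb"pre"^{\verb"M"^{q,A,B}}(\mathbf{s}) = \Box_A q$ and ensure executability, after which the analysis of the $B$-successors (all landing on $\mathbf{s}_\mathbf{q}$ with precondition $q$) proceeds exactly as before.
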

\begin{proof}
Similar to Proposition~\ref{prop:succ1}.
\end{proof}
Proposition~\ref{prop:idempotent} says that the continuous repetition of epistemic interactions will not bring more epistemic effects and Proposition~\ref{lemma:commutativity} reveals that two executable epistemic interactions can be carried out in an arbitrary order.
To check these results, Proposition~\ref{prop:Kenv isomorphic} is needed to simplify their proofs.
\begin{proposition}\label{prop:Kenv isomorphic}
 Assume that the pointed  action models $(\verb"M"_1,\mathbf{s}_\mathbf{1})$ and $(\verb"M"_2,\mathbf{s}_\mathbf{2})$ are the ones  defined in either Definition~\ref{def: K-plus-action model} or Definition~\ref{def: K-star-action model} and they are  both executable in a Kripke model $(M,s)$.
Then
\[\langle \langle s,\mathbf{s}_\mathbf{1}\rangle ,\mathbf{s}_\mathbf{2}\rangle \in W^{(M \otimes \verb"M"_1)\otimes \verb"M"_2}.\]
\end{proposition}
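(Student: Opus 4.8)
The plan is to unfold the product construction of Definition~\ref{def:semantics of action model logic} twice and to reduce the claim to a single satisfaction statement that is then supplied by Proposition~\ref{prop:action execution preserves fact cognizability}. Throughout, $\mathbf{s}_\mathbf{1}$ and $\mathbf{s}_\mathbf{2}$ denote the actual action points of $\verb"M"_1$ and $\verb"M"_2$. By the definition of the state set of a product Kripke model, the membership $\langle \langle s,\mathbf{s}_\mathbf{1}\rangle ,\mathbf{s}_\mathbf{2}\rangle \in W^{(M \otimes \verb"M"_1)\otimes \verb"M"_2}$ is equivalent to the conjunction of
\[\langle s,\mathbf{s}_\mathbf{1}\rangle \in W^{M\otimes\verb"M"_1} \quad\text{and}\quad M\otimes\verb"M"_1,\langle s,\mathbf{s}_\mathbf{1}\rangle \models \verb"pre"^{\verb"M"_2}(\mathbf{s}_\mathbf{2}).\]
It therefore suffices to establish these two conditions separately.

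The left condition is routine. By the definition of $W^{M\otimes\verb"M"_1}$ it amounts to $M,s\models \verb"pre"^{\verb"M"_1}(\mathbf{s}_\mathbf{1})$, which is exactly the hypothesis that $(\verb"M"_1,\mathbf{s}_\mathbf{1})$ is executable at $(M,s)$. In passing I note that this executability gives the identification $(M,s)\otimes(\verb"M"_1,\mathbf{s}_\mathbf{1}) = (M\otimes\verb"M"_1,\langle s,\mathbf{s}_\mathbf{1}\rangle)$, which I will use to translate Proposition~\ref{prop:action execution preserves fact cognizability} into the form I need.

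The right condition is the heart of the argument, and I would dispose of it by a case split on the type of $\verb"M"_2$. If $\verb"M"_2$ is the receiving model of Definition~\ref{def: K-plus-action model}, then $\verb"pre"^{\verb"M"_2}(\mathbf{s}_\mathbf{2})=\top$ and the condition holds trivially. If $\verb"M"_2$ is the interacting model $\verb"M"^{q,A,B}$ of Definition~\ref{def: K-star-action model}, then $\verb"pre"^{\verb"M"_2}(\mathbf{s}_\mathbf{2})=\Box_A q$; executability of $(\verb"M"_2,\mathbf{s}_\mathbf{2})$ at $(M,s)$ then yields $M,s\models \Box_A q$. Applying Proposition~\ref{prop:action execution preserves fact cognizability} with the action model $\verb"M"_1$, the agent $C=A$ and the fact $r=q$ gives $(M,s)\otimes(\verb"M"_1,\mathbf{s}_\mathbf{1})\models \Box_A q$, which by the identification above reads $M\otimes\verb"M"_1,\langle s,\mathbf{s}_\mathbf{1}\rangle \models \Box_A q$, as required.

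The only real difficulty is conceptual rather than computational: recognizing that the precondition of $\verb"M"_2$ at its actual point is always either $\top$ or a single knowledge formula $\Box_A q$, and that Proposition~\ref{prop:action execution preserves fact cognizability} is precisely the statement that such knowledge of a basic fact is preserved under forming the product with $\verb"M"_1$. Once this is seen, no reasoning about the accessibility relations of the product models is needed, and the remainder is a direct unfolding of Definition~\ref{def:semantics of action model logic}.
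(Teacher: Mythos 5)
Your proposal is correct and follows essentially the same route as the paper's own proof: reduce the membership claim to $\langle s,\mathbf{s}_\mathbf{1}\rangle \in W^{M\otimes\verb"M"_1}$ plus $M\otimes\verb"M"_1,\langle s,\mathbf{s}_\mathbf{1}\rangle\models\verb"pre"^{\verb"M"_2}(\mathbf{s}_\mathbf{2})$, then case-split on whether $\verb"M"_2$ is a receiving model (precondition $\top$) or an interacting model (precondition $\Box_A q$, handled by Proposition~\ref{prop:action execution preserves fact cognizability}). Your explicit remark on the identification $(M,s)\otimes(\verb"M"_1,\mathbf{s}_\mathbf{1})=(M\otimes\verb"M"_1,\langle s,\mathbf{s}_\mathbf{1}\rangle)$ is a small extra clarification the paper leaves implicit, but the argument is the same.
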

\begin{proof}
For each $i\in\{1,2\}$, since $(\verb"M"_i,\mathbf{s}_\mathbf{i})$ is executable in $(M,s)$, $\,M,s \models \verb"pre"^{\verb"M"_i}(\mathbf{s}_{\mathbf{i}})$ and  $\langle s,\mathbf{s}_\mathbf{i} \rangle \in W^{M \otimes\verb"M"_i}$.

If $(\verb"M"_2,\mathbf{s}_\mathbf{2})$ is  defined by  Definition~\ref{def: K-plus-action model}, then we have that $\verb"pre"^{\verb"M"_2}(\mathbf{s}_{\mathbf{2}})=\top$. Clearly, $M \otimes\verb"M"_1, \langle s,\mathbf{s}_\mathbf{1} \rangle \models \verb"pre"^{\verb"M"_2}(\mathbf{s}_{\mathbf{2}})$. So, $\langle \langle s,\mathbf{s}_\mathbf{1}\rangle ,\mathbf{s}_\mathbf{2}\rangle \in W^{(M \otimes \verb"M"_1)\otimes \verb"M"_2}$.

If $(\verb"M"_2,\mathbf{s}_\mathbf{2})$ is the action model for passing $q_2\in Atom$ from the agent $A_2$ to $B_2$ defined by Definition~\ref{def: K-star-action model}, then $\verb"pre"^{\verb"M"_2}(\mathbf{s}_{\mathbf{2}})=\Box_{A_2} q_2$.
Further, by Proposition~\ref{prop:action execution preserves fact cognizability}, it follows from $M,s \models \Box_{A_2} q_2$ that $M \otimes\verb"M"_1, \langle s,\mathbf{s}_\mathbf{1} \rangle \models \Box_{A_2} q_2 $.
Hence $\langle \langle s,\mathbf{s}_\mathbf{1}\rangle ,\mathbf{s}_\mathbf{2}\rangle \in W^{(M \otimes \verb"M"_1)\otimes \verb"M"_2}$ holds.
\end{proof}
\begin{proposition}\label{prop:idempotent}
Let $(M,s) $ be a pointed Kripke model and $(\verb"M",\mathbf{s})$ be a pointed action model defined in either Definition~\ref{def: K-plus-action model} or Definition~\ref{def: K-star-action model}. Then
\[((M,s) \otimes (\verb"M",\mathbf{s}))\otimes (\verb"M",\mathbf{s}) \, \underline{\leftrightarrow}\, (M,s) \otimes (\verb"M",\mathbf{s}).\]
\end{proposition}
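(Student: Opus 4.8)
The plan is to split on whether the action model is executable at $(M,s)$ and, in the executable case, to exhibit an explicit bisimulation between the doubly-updated model and the singly-updated model. First I would dispose of the degenerate case. If $(\verb"M",\mathbf{s})$ is not executable at $(M,s)$, i.e. $M,s\nvDash \verb"pre"^{\verb"M"}(\mathbf{s})$, then by Definition~\ref{def:semantics of action model logic} we have $(M,s)\otimes(\verb"M",\mathbf{s})=(M,s)$; since the underlying model is unchanged the precondition still fails, so a second application again returns $(M,s)$. Both sides of the claimed bisimilarity thus equal $(M,s)$ and the result follows from reflexivity of $\underline{\leftrightarrow}$. For $\verb"M"^{q,B}$ this case never arises, as $\verb"pre"(\mathbf{s})=\top$; it is only relevant for $\verb"M"^{q,A,B}$, where $\verb"pre"(\mathbf{s})=\Box_A q$.

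Now assume $M,s\models\verb"pre"^{\verb"M"}(\mathbf{s})$. Then $(M,s)\otimes(\verb"M",\mathbf{s})=(M\otimes\verb"M",\langle s,\mathbf{s}\rangle)$, and by Proposition~\ref{prop:Kenv isomorphic} (applied with $\verb"M"_1=\verb"M"_2=\verb"M"$) the point $\langle\langle s,\mathbf{s}\rangle,\mathbf{s}\rangle$ belongs to $W^{(M\otimes\verb"M")\otimes\verb"M"}$, so the left-hand side equals $((M\otimes\verb"M")\otimes\verb"M",\langle\langle s,\mathbf{s}\rangle,\mathbf{s}\rangle)$. It therefore suffices to produce a bisimulation between $(M\otimes\verb"M")\otimes\verb"M"$ and $M\otimes\verb"M"$ that links $\langle\langle s,\mathbf{s}\rangle,\mathbf{s}\rangle$ with $\langle s,\mathbf{s}\rangle$. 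The key structural observation, read off from Definitions~\ref{def: K-plus-action model} and~\ref{def: K-star-action model}, is that in both action models every action point has, for each agent $C$, exactly one $\verb"R"_C$-successor; write $f_C(\mathbf{u})$ for it. Consequently, in any product model $\langle u,\mathbf{u}\rangle\, R_C\, \langle v,\mathbf{v}\rangle$ forces $\mathbf{v}=f_C(\mathbf{u})$, so the action-point coordinate evolves deterministically under $f_C$ in both the single and the double product. Since the two layers of $\langle\langle s,\mathbf{s}\rangle,\mathbf{s}\rangle$ start equal and each $C$-step advances both layers by the same $f_C$, every state reachable from the actual point has the form $\langle\langle u,\mathbf{u}\rangle,\mathbf{u}\rangle$ with a common action point in both layers. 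This suggests the ``diagonal'' relation
\[\mathcal{Z}\triangleq\{\,\langle\,\langle\langle u,\mathbf{u}\rangle,\mathbf{u}\rangle,\;\langle u,\mathbf{u}\rangle\,\rangle:\;\langle\langle u,\mathbf{u}\rangle,\mathbf{u}\rangle\in W^{(M\otimes\verb"M")\otimes\verb"M"}\,\}.\]

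I would then check the three clauses of Definition~\ref{def:Kripke model bisimulation}. The atoms clause is immediate, since the valuation of any $r\in Atom$ at both $\langle\langle u,\mathbf{u}\rangle,\mathbf{u}\rangle$ and $\langle u,\mathbf{u}\rangle$ reduces, by the product valuation in Definition~\ref{def:semantics of action model logic}, to $u\in V^M(r)$. For (forth), a $C$-transition out of $\langle\langle u,\mathbf{u}\rangle,\mathbf{u}\rangle$ goes to some $\langle\langle v,\mathbf{v}\rangle,\mathbf{x}\rangle$ with $\langle u,\mathbf{u}\rangle\, R_C^{M\otimes\verb"M"}\,\langle v,\mathbf{v}\rangle$ and $\mathbf{u}\,\verb"R"_C\,\mathbf{x}$; functionality gives $\mathbf{x}=f_C(\mathbf{u})=\mathbf{v}$, so the target is again diagonal and is matched by the single-product transition to $\langle v,\mathbf{v}\rangle$. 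The real content is in (back): given $\langle u,\mathbf{u}\rangle\, R_C^{M\otimes\verb"M"}\,\langle v,\mathbf{v}\rangle$, I must show the diagonal target $\langle\langle v,\mathbf{v}\rangle,\mathbf{v}\rangle$ actually exists in the double product, i.e. that $M\otimes\verb"M",\langle v,\mathbf{v}\rangle\models\verb"pre"(\mathbf{v})$. Here one uses that $\mathbf{v}=f_C(\mathbf{u})$ is never the point $\mathbf{s}$ (the image of every $f_C$ lies in $\{\mathbf{s}_\mathbf{q},\mathbf{s}_\mathbf{\top}\}$), so the precondition $\verb"pre"(\mathbf{s})$, which is $\Box_A q$ in the interacting case, is never invoked; and that the two remaining preconditions are preserved by the update, because $\verb"pre"(\mathbf{s}_\mathbf{\top})=\top$ holds vacuously, while $\verb"pre"(\mathbf{s}_\mathbf{q})=q$ is an atom whose truth at $\langle v,\mathbf{s}_\mathbf{q}\rangle$ is equivalent to $M,v\models q$, which already holds since $\langle v,\mathbf{s}_\mathbf{q}\rangle\in W^{M\otimes\verb"M"}$ forces $M,v\models\verb"pre"(\mathbf{s}_\mathbf{q})=q$.

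The main obstacle is precisely this (back) clause: one must be sure that re-applying the action model deletes no successor, i.e. that every precondition holding after the first update still holds after the second. The argument above isolates why this works for exactly these two action models: the determinism of the action-point frame keeps us on the diagonal, the point $\mathbf{s}$ is unreachable as a successor so its nontrivial precondition is irrelevant, and the only other nontrivial precondition $q$ is an atom and hence frame-independent under $\otimes$. I would note that the same diagonal bisimulation handles Definition~\ref{def: K-plus-action model} and Definition~\ref{def: K-star-action model} uniformly, the sole difference being the bookkeeping of which agents follow the $\mathbf{s}_\mathbf{q}$-branch, which does not affect the verification.
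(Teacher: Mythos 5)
Your proposal is correct and follows essentially the same route as the paper: the same split on executability, the same appeal to Proposition~\ref{prop:Kenv isomorphic}, the same diagonal bisimulation $\mathcal{Z}$, and the same key observation for \textbf{(back)} that no arrow enters $\mathbf{s}$, so every successor's precondition is $\top$ or an atom and hence survives the second update (you argue this directly via the product valuation where the paper invokes Proposition~\ref{prop:precondition of composition of action models}, which is the same fact). Your explicit functionality argument for \textbf{(forth)} is a welcome refinement of the paper's ``holds trivially,'' since keeping the relation on the diagonal does genuinely depend on each $\verb"R"_C$ being a function.
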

\begin{proof}
If $M,s \nvDash \verb"pre"^{ \verb"M"} (\mathbf{s} )$ then $(M,s) \otimes (\verb"M",\mathbf{s})=(M,s) $ and it holds trivially.
Otherwise,  $(M,s) \otimes (\verb"M",\mathbf{s})=(M \otimes\verb"M", \langle s, \mathbf{s} \rangle)$ and $(\verb"M", \mathbf{s})$ is  executable in $(M,s)$. Thus, $\langle s, \mathbf{s} \rangle \in W^{M \otimes \verb"M"}$ and $\langle \langle s,\mathbf{s}\rangle ,\mathbf{s}\rangle \in W^{(M \otimes \verb"M")\otimes \verb"M"}$ by Proposition~\ref{prop:Kenv isomorphic}. Further, we have
\[((M,s) \otimes (\verb"M",\mathbf{s}))\otimes (\verb"M",\mathbf{s})=((M \otimes\verb"M") \otimes\verb"M", \langle \langle s, \mathbf{s}\rangle, \mathbf{s} \rangle).\]
In the following, we intend to check that
\[\mathcal{Z}: ((M \otimes\verb"M") \otimes\verb"M", \langle \langle s, \mathbf{s}\rangle, \mathbf{s} \rangle)\, \underline{\leftrightarrow}\, (M \otimes\verb"M", \langle s, \mathbf{s} \rangle)\]
where
\[\mathcal{Z} \triangleq \left\{\langle\langle\langle u,\mathbf{u}\rangle,\mathbf{u} \rangle, \langle u,\mathbf{u}\rangle \rangle :\;\begin{subarray}{1} \langle u,\mathbf{u}\rangle \in W^{M \otimes \verb"M"}\;\; \text{ and }\\
\langle\langle u,\mathbf{u}\rangle,\mathbf{u} \rangle \in W^{(M \otimes \verb"M")\otimes \verb"M"} \end{subarray}\; \right\}.\]

Let $\langle\langle u,\mathbf{u}\rangle,\mathbf{u} \rangle \mathcal{Z}\langle u,\mathbf{u} \rangle  $. Then $ \langle\langle u,\mathbf{u}\rangle,\mathbf{u} \rangle \in W^{(M \otimes \verb"M")\otimes \verb"M"} $ and $\langle u,\mathbf{u}\rangle\in W^{M \otimes \verb"M"}$.
By Definition~\ref{def:semantics of action model logic}, the conditions \textbf{(atoms)} and \textbf{(forth)} hold trivially. We below consider \textbf{(back)}. Let $\langle u,\mathbf{u} \rangle R_C^{M\otimes \verb"M"} \langle u',\mathbf{u}' \rangle$ and $C\in Agent$. It suffices to show that $\langle \langle u',\mathbf{u}'\rangle ,\mathbf{u}'\rangle \in W^{(M \otimes \verb"M")\otimes \verb"M"}$. Because of $\langle u',\mathbf{u}'\rangle \in W^{M \otimes \verb"M"}$, we have $M,u' \models \verb"pre"^{ \verb"M"} (\mathbf{u}' )$.
Since there exists no arrow entering $\mathbf{s}$  by Definition~\ref{def: K-plus-action model}  and Definition~\ref{def: K-star-action model}, due to $\langle u, \mathbf{u}\rangle R_C^{M \otimes \verb"M"}  \langle u', \mathbf{u}'\rangle$, we get $\mathbf{u}'\neq \mathbf{s}$. So
\begin{equation}
\verb"pre"^{ \verb"M"}( \mathbf{u}' )\in \{\top\} \cup Atom. \; \label{eq:environmentexample-4}
\end{equation}
Thus, by Proposition~\ref{prop:precondition of composition of action models}, from $M,u' \models \verb"pre"^{ \verb"M"} (\mathbf{u}' ) \wedge \verb"pre"^{ \verb"M"} (\mathbf{u}' )$ and (\ref{eq:environmentexample-4}), it follows  that $M,u' \models   [\,\verb"M",\mathbf{u}'\,]\, \verb"pre"^{ \verb"M"}(\mathbf{u}')$.
Therefore $\langle \langle u',\mathbf{u}'\rangle, \mathbf{u}'\rangle \in W^{(M \otimes \verb"M")\otimes \verb"M"}$ holds, as desired.
\end{proof}
\begin{proposition}\label{lemma:commutativity}
Let $(M,s) $ be a pointed Kripke model. Assume that $(\verb"M",\mathbf{s})$ and $ (\verb"N", \mathbf{t})$ be two pointed action models defined in either Definition~\ref{def: K-plus-action model} or Definition~\ref{def: K-star-action model}  and  both executable in  $(M,s)$. Then
\[((M,s) \otimes (\verb"M",\mathbf{s}))\otimes (\verb"N",\mathbf{t})\, \underline{\leftrightarrow} \,((M,s) \otimes (\verb"N",\mathbf{t}))\otimes (\verb"M",\mathbf{s}).\]
\end{proposition}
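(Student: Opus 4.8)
The plan is to build an explicit bisimulation between the two doubly-updated Kripke models that merely swaps the order of the two appended action components. Write $\mathbf{u}$ for a generic action point of $\verb"M"$ (with actual point $\mathbf{s}$) and $\mathbf{v}$ for one of $\verb"N"$ (with actual point $\mathbf{t}$). First I would record that, since $(\verb"M",\mathbf{s})$ and $(\verb"N",\mathbf{t})$ are executable in $(M,s)$, Definition~\ref{def:semantics of action model logic} turns the two inner updates into genuine products, and Proposition~\ref{prop:Kenv isomorphic} yields $\langle\langle s,\mathbf{s}\rangle,\mathbf{t}\rangle\in W^{(M\otimes\verb"M")\otimes\verb"N"}$ and $\langle\langle s,\mathbf{t}\rangle,\mathbf{s}\rangle\in W^{(M\otimes\verb"N")\otimes\verb"M"}$; hence the outer updates are products as well, and it suffices to relate the pointed models $((M\otimes\verb"M")\otimes\verb"N",\langle\langle s,\mathbf{s}\rangle,\mathbf{t}\rangle)$ and $((M\otimes\verb"N")\otimes\verb"M",\langle\langle s,\mathbf{t}\rangle,\mathbf{s}\rangle)$.

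For this I would take
\[\mathcal{Z}\triangleq\left\{\langle\langle\langle u,\mathbf{u}\rangle,\mathbf{v}\rangle,\langle\langle u,\mathbf{v}\rangle,\mathbf{u}\rangle\rangle:\;\langle\langle u,\mathbf{u}\rangle,\mathbf{v}\rangle\in W^{(M\otimes\verb"M")\otimes\verb"N"}\text{ and }\langle\langle u,\mathbf{v}\rangle,\mathbf{u}\rangle\in W^{(M\otimes\verb"N")\otimes\verb"M"}\right\},\]
which links the two actual points by the previous step. The reason this relation is natural is that, unfolding Definition~\ref{def:semantics of action model logic} twice, the accessibility relation of $(M\otimes\verb"M")\otimes\verb"N"$ holds from $\langle\langle u,\mathbf{u}\rangle,\mathbf{v}\rangle$ to $\langle\langle u',\mathbf{u}'\rangle,\mathbf{v}'\rangle$ exactly when $u R^M_C u'$, $\mathbf{u}\verb"R"^{\verb"M"}_C\mathbf{u}'$ and $\mathbf{v}\verb"R"^{\verb"N"}_C\mathbf{v}'$ (both endpoints being states), a conjunction symmetric in the $\verb"M"$- and $\verb"N"$-components; moreover the valuation of each double product depends only on $u$. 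Consequently the atoms clause of Definition~\ref{def:Kripke model bisimulation} is immediate and the relational content of the forth/back clauses is met by the swapped target $\langle\langle u',\mathbf{v}'\rangle,\mathbf{u}'\rangle$, so in each clause the only residual obligation is that this swapped target is again a legal state.

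I expect this last obligation --- the transfer of state-existence across the swap --- to be the crux, because the precondition of an actual point can be $\Box_A q$ (Definition~\ref{def: K-star-action model}), which is not propositional and is not guaranteed to survive $\otimes$. The way out is the structural feature already exploited in Proposition~\ref{prop:idempotent}: in both Definition~\ref{def: K-plus-action model} and Definition~\ref{def: K-star-action model} no arrow enters the actual point, so every $R_C$-successor carries action points $\mathbf{u}'\neq\mathbf{s}$ and $\mathbf{v}'\neq\mathbf{t}$, whose preconditions lie in $\{\top\}\cup Atom$ and are therefore propositional. For propositional preconditions Proposition~\ref{prop:precondition of composition of action models} shows they are evaluated after an (executable) update exactly as in the underlying model; reading this off from $\langle\langle u',\mathbf{u}'\rangle,\mathbf{v}'\rangle\in W^{(M\otimes\verb"M")\otimes\verb"N"}$ gives $M,u'\models\verb"pre"^{\verb"M"}(\mathbf{u}')$ and $M,u'\models\verb"pre"^{\verb"N"}(\mathbf{v}')$, which are precisely the two facts forcing $\langle\langle u',\mathbf{v}'\rangle,\mathbf{u}'\rangle\in W^{(M\otimes\verb"N")\otimes\verb"M"}$. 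Thus the forth clause closes and the back clause follows by symmetry; the only place an actual-point precondition must be evaluated directly is in checking that the initial pair lies in $\mathcal{Z}$, and this was already handed to us by Proposition~\ref{prop:Kenv isomorphic}. Hence $\mathcal{Z}$ is a bisimulation linking the two actual points, which gives the desired $\underline{\leftrightarrow}$.
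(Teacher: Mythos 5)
Your proposal is correct and follows essentially the same route as the paper's own proof: the same swap bisimulation $\mathcal{Z}$, the same use of Proposition~\ref{prop:Kenv isomorphic} to link the actual points, and the same reduction of the forth/back clauses to state-existence of the swapped target, discharged via the no-arrow-into-the-actual-point observation together with Proposition~\ref{prop:precondition of composition of action models} applied in both directions. The only cosmetic difference is that you spell out the symmetry of the product accessibility relation explicitly, which the paper leaves implicit.
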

\begin{proof}
Since both $(\verb"M", \mathbf{s})$ and $ (\verb"N", \mathbf{t})$ are executable in $(M,s)$,
 by Proposition~\ref{prop:Kenv isomorphic}, we have that $\langle \langle s,\mathbf{s}\rangle ,\mathbf{t}\rangle \in W^{(M \otimes \verb"M")\otimes \verb"N"}$,  $\langle \langle s,\mathbf{t}\rangle ,\mathbf{s}\rangle \in W^{(M \otimes \verb"N")\otimes \verb"M"}$ and
\[\begin{array}{lll}
((M,s) \otimes (\verb"M",\mathbf{s})) \otimes (\verb"N",\mathbf{t}) &=&((M \otimes\verb"M") \otimes\verb"N", \langle \langle s, \mathbf{s}\rangle, \mathbf{t} \rangle) \\
((M,s) \otimes (\verb"N",\mathbf{t})) \otimes (\verb"M",\mathbf{s})&=&((M \otimes\verb"N") \otimes\verb"M", \langle \langle s, \mathbf{t}\rangle, \mathbf{s} \rangle).
\end{array}\]
In the following, we intend to check
\[\mathcal{Z}: ((M \otimes\verb"M") \otimes\verb"N", \langle \langle s, \mathbf{s}\rangle, \mathbf{t} \rangle)\, \underline{\leftrightarrow} \,((M \otimes\verb"N") \otimes\verb"M", \langle \langle s, \mathbf{t}\rangle, \mathbf{s} \rangle)\]
where
\[\mathcal{Z} \triangleq \left\{\langle\langle\langle u, \mathbf{u}\rangle,\mathbf{v} \rangle,\, \langle\langle u,\mathbf{v}\rangle,\mathbf{u} \rangle\rangle :  \begin{subarray}{1} \langle\langle u, \mathbf{u}\rangle,\mathbf{v} \rangle\in W^{(M \otimes \verb"M") \otimes \verb"N"}\text{ and } \\
\langle\langle u,\mathbf{v}\rangle,\mathbf{u} \rangle\in W^{(M \otimes \verb"N")\otimes \verb"M"} \end{subarray}\right\}.\]

 Suppose   $ \langle\langle u, \mathbf{u}\rangle,\mathbf{v} \rangle \mathcal{Z }\langle \langle u,\mathbf{v}\rangle,\mathbf{u} \rangle$.
Then, we have  that $\langle \langle u,\mathbf{u}\rangle,\mathbf{v} \rangle \in W^{(M \otimes \verb"M")\otimes \verb"N"}$
 and $\langle \langle u,\mathbf{v}\rangle,\mathbf{u} \rangle \in W^{(M \otimes \verb"N")\otimes \verb"M"}$.
By Definition~\ref{def:semantics of action model logic},  the condition \textbf{(atoms)} holds trivially. We will below check \textbf{(forth)}, and \textbf{(back)} may be checked similarly and omitted.

Let $\langle\langle u, \mathbf{u}\rangle,\mathbf{v} \rangle R_C^{(M \otimes \verb"M")\otimes \verb"N"}  \langle\langle u', \mathbf{u}'\rangle,\mathbf{v}' \rangle$ and $C\in Agent$. It suffices to show  $\langle \langle u',\mathbf{v}'\rangle,\mathbf{u}' \rangle \in W^{(M \otimes \verb"N")\otimes \verb"M"}$. Clearly, due to $\langle \langle u',\mathbf{u}'\rangle,\mathbf{v}' \rangle \in W^{(M \otimes \verb"M")\otimes \verb"N"}$, it holds that $ \langle u',\mathbf{u}'\rangle \in W^{M \otimes \verb"M"}$ and $M \otimes \verb"M",\langle u',\mathbf{u}'\rangle\models \verb"pre"^{ \verb"N"}(\mathbf{v}')$, and hence
\begin{equation}
M,u' \models  \verb"pre"^{ \verb"M"}(\mathbf{u}') \wedge [\,\verb"M",\mathbf{u}'\,]\, \verb"pre"^{ \verb"N"}(\mathbf{v}'). \; \label{eq:environmentexample-3}
\end{equation}
Since there exists no arrow entering $\mathbf{s}$ ($\mathbf{t}$) by Definition~\ref{def: K-plus-action model}  and Definition~\ref{def: K-star-action model}, due to $\langle\langle u, \mathbf{u}\rangle,\mathbf{v} \rangle R_C^{(M \otimes \verb"M")\otimes \verb"N"}  \langle\langle u', \mathbf{u}'\rangle,\mathbf{v}' \rangle$, we get $\mathbf{u}'\neq \mathbf{s}$ and $\mathbf{v}'\neq \mathbf{t}$. So
\[\verb"pre"^{ \verb"M"}( \mathbf{u}' ), \verb"pre"^{ \verb"N"}( \mathbf{v}' )\in \{\top\} \cup Atom .\]
Then, by Proposition~\ref{prop:precondition of composition of action models}, it follows from (\ref{eq:environmentexample-3}) that
\[M,u' \models  \verb"pre"^{ \verb"M"}(\mathbf{u}') \wedge \verb"pre"^{ \verb"N"}(\mathbf{v}').\]
Thus, by applying Proposition~\ref{prop:precondition of composition of action models} again, it holds that
\[M,u' \models  \verb"pre"^{ \verb"N"}(\mathbf{v}') \wedge [\,\verb"N",\mathbf{v}'\,]\, \verb"pre"^{ \verb"M"}(\mathbf{u}').  \]
Therefore $\langle \langle u',\mathbf{v}'\rangle, \mathbf{u}'\rangle \in W^{(M \otimes \verb"N")\otimes \verb"M"}$ holds, as desired.
\end{proof}
\section{Synchronous communication}\label{sec:example-synchronous communication}
Point-to-point communications between agents are fundamental and often used in many applications. In the e-calculus, the SOS rule (COMM$_{\text{e}}$-fact-L) is adopted to model point-to-point communication pattern. This rule describes that, for two agents $A$ and $B$ in an e-system, a communication of the fact $q$ between them will bring their common knowledge about $q$. In such situations, facts are \emph{immediately} received at the same time they are sent. It is a natural problem  which agents will participate in a communication. In the e-calculus, with the help of the operators, the agents participating in a communication are either \emph{determined} or \emph{non-determined}.

For the former, as an example, consider the e-system  below
 \[E \triangleq (\nu b,c)\,([\,\overline{b} p.\, \overline{c} p.\,\mathbf{0} \,]_A \parallel [\,b(\chi).\,\mathbf{0}\,]_B\parallel [\,c(\chi').\,\mathbf{0}\,]_C). \]
Let $(M,s)$ be a pointed Kripke model with $M,s\models \Box_A p$. Intuitively, $E$ at $(M,s)$ has the evolution path as follows
\[\begin{array}{ll}
(M,s)\blacktriangleright E  \xlongrightarrow  {\langle b,p,A,B\rangle}(M,s)\otimes (\verb"M"^{p,A,B},\mathbf{s}) \blacktriangleright E'\\
\qquad  \xlongrightarrow  {\langle c,q,A,C\rangle} ((M,s)\otimes (\verb"M"^{p,A,B},\mathbf{s}))\otimes  (\verb"N"^{q,A,C},\mathbf{t}) \blacktriangleright E''.
 \end{array}\]
with
\[\begin{array}{lll}
E' &\triangleq & (\nu b,c)\,([\, \overline{c} q.\,\mathbf{0} \,]_A \parallel [\,\mathbf{0}\,]_B\parallel [\,c(\chi').\,\mathbf{0}\,]_C)\\
E'' & \triangleq & (\nu b,c)\,([\,\mathbf{0} \,]_A \parallel [\,\mathbf{0}\,]_B\parallel [\,\mathbf{0}\,]_C)
\end{array}\]
Due to $M,s\models \Box_A p$, Proposition~\ref{prop:action execution preserves fact cognizability} and the SOS rule (COMM$_{\text{e}}$-fact-L), the fact $p$ will be passed from the agent $A$ to $B$ and after this, to $C$.

For the latter, we give an example about the cooperation of cognitive robots in intelligent machine communities.
\begin{example}\label{example2-ptop}
Consider the application scenario where the cognitive robot $A$ and anyone of the cognitive robots $D_k $ ($1\leq k\leq 3$)  (all of these robots  have jacks) work together to move a table. The robot $A$ has been in place and the table may be moved only when another robot is also in place.  We make the following assumptions.

$(1) $ A robot will know that he has been in place when he reaches
 the predetermined position. The propositional letter $q_k$ is used to denote the basic fact that the robot $D_k$  is in place.

$(2) $ After the robot $D_k$ passes the fact $q_k$ to the robot $A$, the
fact $q_k$ will become the common knowledge between these two robots.

$(3)$ The robots $A$ and $D_k$ will start their jacks to move the
table when the robots $A$ knows that the robot $D_k$ is in place.

 This scenario may be specified as
 \[\nu a\,(\,[\, a (\chi).\,X \,]_A \parallel \underset{1\leq k\leq 3}{ \parallel}[\,\overline{a_k} \, q_k.\,\mathbf{0}\,]_{D_k}).\]
 Due to the assumptions  $(1)$ and $(2)$ and the SOS rule (COMM$_{\text{e}}$-fact-L),   whenever the robot $D_k$ reaches its predetermined position for some $1\leq k\leq 3$, the fact $q_k$ will be passed to the robot $A$ by the robot $D_k$ and hence it will become a common knowledge between them. Therefore, the jacks of the robots $A$ and $D_k$ will be triggered and the table will be moved due to the assumption $(3)$.
\end{example}
Moreover, if the scenario in Example~\ref{example2-ptop}  is formalized by
  \[\nu a\,([\, P \,]_A \parallel \underset{1\leq k\leq 3}{ \parallel}[\,\overline{a_k} q_k.\,\mathbf{0}\,]_{D_k})\]
with
\[P \triangleq  a_1 (\chi).\,X_1+a_2 (\chi).\,X_2+a_3 (\chi).\,X_3, \]
then by the SOS rule (SUM), this e-system will be only able to exercise the capabilities of $X_k$ whenever it is the robots $A$ and $D_k$ to cooperate to move the table for some $1\leq k\leq 3$.
\section{Asynchronous communication}\label{sec:example-asynchronous communication}
This section  will apply the e-calculus to formalize asynchronous communications, in which there may be an unpredictable \emph{delay} between sending and receiving messages. Then, it is a natural problem in what order the agents will receive the messages which have been sent and are waiting to be received. Due to the rules (OPEN) and (CLOSE),  the e-calculus can create and transmit \emph{new} bound channel names. Based on this, to capture this \emph{asynchrony}, we intend to construct a \emph{buffer pool} with storage cells addressed by channel names,  which is used to store the announced basic facts (propositional letters) and from which the receiver can read these basic facts when these agents are ready to process them. The e-calculus can construct such buffer pools visited in different orders. In the following, we intend to give two examples: one is in the order in which the announced facts are sent (First-in-first-out, FIFO), and the other is in an arbitrary order.
\begin{notation}\label{not-notation1}
To ease the notations, we shall adopt the following abbreviations
\[\begin{array}{lll}
\underset{1\leq i\leq n}{ \parallel}  H_i &\triangleq &H_1 \parallel \cdots \parallel H_n\\
\underset{1\leq i\leq n}{ \mid}  X_i &\triangleq &X_1 \mid  \cdots \mid X_n\\
(\nu z_1, \cdots, z_n)&\triangleq &\nu z_1 \cdots \nu z_n.
\end{array}\]
\end{notation}
\subsection{Reading in FIFO order}\label{subsec-asynchronous communication-FIFO}
In~\citep[Subsection 1.2.3]{Sangiorgi2001pi-calculus}, the $\pi$-calculus builds unbounded chains of storage cells and other processes can navigate the chains by retrieving and following the links stored in the cells that comprise them. Similarly, this subsection will define an e-system as an example which constructs a unbounded \emph{chained} buffer pool, whose storage cells are used to store the basic \emph{facts} obtained from some external source and the \emph{links}. By retrieving and following these links, the agents can navigate the buffer pool and read these facts from the storage cells in the order FIFO.

At first glance, by the similar strategy as in~\citep[Subsection 1.2.3]{Sangiorgi2001pi-calculus}, we may consider the following e-system $G$ where
\[\begin{array}{lll}
G& \triangleq &(\nu h,a,c)\,([P\mid Q\mid Z]_{D^{\blacklozenge}} \parallel [X ]_{D}\parallel [Y]_{A} \parallel [Y]_{B} )\\
P &\triangleq & h(z).\,a(\chi).\,!\,\overline{h} z.\, \overline{z} \chi.\,\mathbf{0}\\
Q&\triangleq & !\,\nu b\, (\overline{c} b.\,b(y).\,a(\chi_1) .\, !\,\overline{b} y.\,\overline{y} \chi_1.\,\mathbf{0})\\
Z& \triangleq & \nu e\, (\overline{e} h.\,\mathbf{0}\mid \,!\, e(z_1).\,c(y_1).\,\overline{z_1}\, y_1.\, \overline{e} y_1.\,\mathbf{0} )\\
Y &\triangleq & \nu g\, (\overline{g} h.\,\mathbf{0}\mid \,!\, g(z_2).\,z_2(y_2).\, y_2(\chi_2).\, \overline{g} y_2.\,\mathbf{0} )\\
X & \triangleq & (\nu a_0,a_1)\,(\overline{a_0} a_1.\,\mathbf{0}\mid \,!\,a_0(x).\,a^{\ast}(\chi^{\ast}) .\,\overline{a} \chi^{\ast}.\,\overline{a_0} a_1.\,\mathbf{0}).
\end{array}\]
The process $P$ can be thought of as a \emph{heading cell} named $h$, which points to the first cell storing facts.
The process $Q$ is a generator of storage cells.
The agent $D^{\blacklozenge} $ is responsible for managing the chained buffer pool builded through $Z$. The agent $D$ obtains basic facts from an external source via $a^{\ast}$ and stores these facts into the buffer pool in the order that they are obtained. The agents $A$ and $B$ read these basic facts from the buffer pool in the chain order. See~\citep{Sangiorgi2001pi-calculus} for more ideas behind this construction.

To see how these components of $G$ realizes these goals, we below give one possible evolution of $ G$ at a given initial epistemic state $(M,s)$ in detail.

The evolution starts with an interaction between the components of $Z$ via the bound name $e$:
\[ (M,s)\blacktriangleright G  \xlongrightarrow {\tau} (M,s) \blacktriangleright G_1\]
where
\begin{align*}
G_1\; &\triangleq \;  (\nu h,a,c)\,([P\mid Q\mid Z_1]_{D^{\blacklozenge}} \parallel [X ]_{D}\parallel [Y]_{A} \parallel [Y]_{B} )\\
Z_1 \;&\triangleq \; \nu e (\mathbf{0} \mid c(y_1).\overline{h} y_1. \overline{e} y_1.\mathbf{0} \mid \,!\, e(z_1).\,c(y_1).\overline{z_1}  y_1. \overline{e} y_1.\mathbf{0} ).
\end{align*}
The evolution continues between $Q$ and $Z_1$:
\begin{equation}
(M,s)\blacktriangleright G_1  \xlongrightarrow {\tau} (M,s) \blacktriangleright G_2\; \label{eq:example-asychronous-2}
\end{equation}
where
\[\begin{array}{lll}
G_2 &\triangleq &(\nu h,a,c,b_1)\,([P\mid Q_1\mid Q\mid Z_2]_{D^{\blacklozenge}} \parallel [X ]_{D}\parallel H )\\
Z_2& \triangleq & \nu e\, (\mathbf{0} \mid\overline{h} b_1.\, \overline{e} b_1.\,\mathbf{0} \mid \,!\, e(z_1).\,c(y_1).\,\overline{z_1}\, y_1.\, \overline{e} y_1.\,\mathbf{0} )\\
Q_1 &\triangleq & b_1(y).\,a(\chi_1) .\, !\,\overline{b_1} y.\,\overline{y} \chi_1.\,\mathbf{0}.\\
H & \triangleq &  [Y]_{A} \parallel [Y]_{B}
\end{array}\]
This expresses that a new cell $Q_1$, named $b_1$, is created and its name is transmitted to $Z_1$ via $c$.
Next, the evolution continues with an interaction between $Z_2$ and $P$ via $h$, which expresses that $Z_2$ stores $b_1$ into the heading cell $h$:
\begin{equation}
 (M,s)\blacktriangleright G_2 \xlongrightarrow {\tau} (M,s) \blacktriangleright G_3\; \label{eq:example-asychronous-1}
\end{equation}
where
\[\begin{array}{lll}
G_3 &\triangleq &(\nu h,a,c,b_1)\,([P_1\mid Q_1\mid Q\mid Z_3]_{D^{\blacklozenge}} \parallel [X ]_{D}\parallel H )\\
P_1 &\triangleq & a(\chi).\,!\,\overline{h} b_1.\, \overline{b_1} \chi.\,\mathbf{0}\\
Z_3 &\triangleq & \nu e\, (\mathbf{0} \mid\overline{e} b_1.\,\mathbf{0} \mid \,!\, e(z_1).\,c(y_1).\,\overline{z_1}\, y_1.\, \overline{e} y_1.\,\mathbf{0} ).
\end{array}\]
In the next three evolution steps, after an interaction between the components of $X$ via the bound name $a_0 $, the agent $D$ receives a basic fact $q_1$ via $a^{\ast}$ from the external source after which $D$ knows $q_1$ (i.e., $(M,s)\otimes  (\verb"M"^{q_1,D},\mathbf{s}) \models \Box_{D} q_1 $) by Proposition~\ref{prop:succ1} so that $D$ can send $q_1$  due to the rule (OUT$_{\text{e}}$), then by communicating $q_1$ between $P_1$ and the continuation of $X$ via $a$, the fact $q_1$ is stored into the cell $b_1$:
\[\begin{array}{lll}
 (M,s)\blacktriangleright G_3& \xlongrightarrow {\tau} (M,s) \blacktriangleright G_4\\
  &\xlongrightarrow {\langle a^{\ast} q_1,D\rangle} (M,s)\otimes (\verb"M"^{q_1,D},\mathbf{s_1}) \blacktriangleright G_5 \\
  &\xlongrightarrow {\langle a,q_1,D,D^{\blacklozenge}\rangle} (M_1,s_1) \blacktriangleright G_6
 \end{array}\]
where
\[(M_1,s_1)  \triangleq  ((M,s)\otimes (\verb"M"^{q_1,D},\mathbf{s_1}))\otimes  (\verb"M"^{q_1,D,D^{\blacklozenge}},\mathbf{t_1})\]	
and
\begin{align*}
G_4 &\triangleq (\nu h,a,c,b_1)\,([\,P_1\mid Q_1\mid Q\mid Z_3\,]_{D^{\blacklozenge}} \parallel [\,X_1 \,]_{D}\parallel H )\\
G_5 &\triangleq (\nu h,a,c,b_1)\,([\,P_1\mid Q_1\mid Q\mid Z_3\,]_{D^{\blacklozenge}} \parallel [\,X_2 \,]_{D}\parallel H )\\
G_6 &\triangleq (\nu h,a,c,b_1)\,([\,P_2\mid Q_1\mid Q\mid Z_3\,]_{D^{\blacklozenge}} \parallel [\,X_3 \,]_{D}\parallel H )\\
X_1 & \triangleq  (\nu a_0,a_1)\,(\mathbf{0}\mid a^{\ast}(\chi^{\ast}) .\,\overline{a}\, \chi^{\ast}.\,\overline{a_0}\, a_1.\,\mathbf{0}\mid \,!\,a_0(x).\,a^{\ast}(\chi^{\ast}) .\,\overline{a} \chi^{\ast}.\,\overline{a_0} a_1.\,\mathbf{0})\\
X_2 & \triangleq (\nu a_0,a_1)(\mathbf{0}\mid\overline{a} q_1.\overline{a_0} a_1.\mathbf{0}\mid  !a_0(x).a^{\ast}(\chi^{\ast}) .\overline{a} \chi^{\ast}.\overline{a_0} a_1.\mathbf{0})\\
X_3 & \triangleq (\nu a_0,a_1)\,(\mathbf{0}\mid \overline{a_0}\, a_1.\,\mathbf{0}\mid \,!\,a_0(x).\,a^{\ast}(\chi^{\ast}) .\,\overline{a} \chi^{\ast}.\,\overline{a_0}\, a_1.\,\mathbf{0})\\
P_2 &\triangleq \,!\,\overline{h} b_1.\, \overline{b_1}\, q_1.\,\mathbf{0}
\end{align*}
Here, $(\verb"M"^{q_i,D},\mathbf{s_i})$ is the pointed action model for the agent $D$ receiving $q_i$ and $(\verb"M"^{q_i,D,D^{\blacklozenge}},\mathbf{t_i})$  for passing $q_i$ from $D$ to $D^{\blacklozenge}$ with $1\leq i \leq n $. Now, through $P_2$, the agents can read $q_1$ from the cell $b_1$, since $(M_1,s_1)\models \Box_{D^{\blacklozenge}} q_1 $ by Proposition~\ref{prop:succ2}.

We easily see that $X_3$ is the same as $X$, except that they are connected to the different cells. Hence the construction of the buffer pool will continue. The next six steps are similar to the first six steps, which will build a second new cell, and store its name into the cell $b_1$ and a second received fact $q_2$ into this new cell. After repeating this $n$ ($\geq 1$) times, we construct a chained $n$-cell buffer pool storing the basic facts  $q_1$, $\cdots$, $q_n$ received from the external source, with the current epistemic state as follows
\[
(M',s')\triangleq (M,s)\otimes (\verb"M"^{q_1,D},\mathbf{s_1})\otimes  (\verb"M"^{q_1,D,D^{\blacklozenge}},\mathbf{t_1})\cdots\otimes (\verb"M"^{q_n,D},\mathbf{s_n})\otimes  (\verb"M"^{q_n,D,D^{\blacklozenge}},\mathbf{t_n}).
\]
So, by Proposition~\ref{prop:succ1}, Proposition~\ref{prop:succ2} and Proposition~\ref{prop:action execution preserves fact cognizability},
 \begin{equation}
(M',s') \models \Box_{D^{\blacklozenge}} q_i \quad \text{ for each } 1\leq i \leq n.\; \label{eq:as-1}
\end{equation}
Thus, the agents $A$ and $B$ can read these facts from the buffer pool. The agent $A$ reads $q_1$ by, after an interaction between the components of $Y$ via $g$, communicating $b_1$ between $P_2$ and the continuation of $Y$ via $h$ and then $q_1$ via $b_1$, which leads to the following state
\[\begin{array}{ll}
(M',s')\otimes (\verb"M"^{q_1,D^{\blacklozenge},A},\mathbf{u_1 })\blacktriangleright(\nu h,a,c,b_1)\,( \\
\qquad [\,\mathbf{0}\mid P_2\mid Q_1\mid Q\mid Z_3\,]_{D^{\blacklozenge}}\parallel [\,X_3 \,]_{D}\parallel [\,Y_1\,]_{A} \parallel [\,Y\,]_{B}\, )
\end{array}\]
where
\[Y_1\;\; \triangleq \;\; \nu g\, (\mathbf{0}\mid\overline{g} b_1.\,\mathbf{0}\mid \,!\, g(z_2).\,z_2(y_2).\, y_2(\chi_2).\, \overline{g} y_2.\,\mathbf{0} ). \]
Here, $(\verb"M"^{q_i,D^{\blacklozenge},A},\mathbf{u_i })$ is the pointed action model for passing $q_i$ from $D^{\blacklozenge}$ to $A$ with $1\leq i \leq n $.

However, there are some \emph{unexpected} possibilities (i.e., flaws) through the evolution of $(M,s) \blacktriangleright G$. We have to revise these flaws below.\\

\noindent \textbf{(Flaw 1)} ~~In the evolution step (\ref{eq:example-asychronous-1}), it is not difficult to see that, by activating another copy of the replication in $Z_3$, the evolution is indeed able to continue with carrying out another construction step. This means that,
the facts might \emph{not} be stored in \emph{consecutive} cells of the buffer pool.\\

\noindent \textbf{(Revise 1)} ~~At first sight, we may use the name $d$ to denote that a fact has been stored into a new cell. Motivated by this idea, $P$, $Q$ and $Z$ are redefined as follows, with the added parts underlined.
\[\begin{array}{lll}
G& \triangleq &(\nu h,a,c,\underline{f,d})\,([\,P\mid Q\mid Z\,]_{D^{\blacklozenge}} \parallel [\,X \,]_{D}\parallel H )\\
P &\triangleq & h(z).\,a(\chi).\,\underline{\overline{f} d.}\, !\,\overline{h} z.\, \overline{z} \chi.\,\mathbf{0}\\
Q&\triangleq & !\,\nu b\, (\overline{c} b.\,b(y).\,a(\chi_1).\,\underline{\overline{f} d} .\, !\,\overline{b} y.\,\overline{y} \chi_1.\,\mathbf{0})\\
Z& \triangleq & \nu e\, (\overline{e} h.\,\mathbf{0}\mid \,!\, e(z_1).\,c(y_1).\,\overline{z_1} y_1.\, \underline{f(x).}\, \overline{e} y_1.\,\mathbf{0} )\\
Y &\triangleq & \nu g\, (\overline{g} h.\,\mathbf{0}\mid \,!\, g(z_2).\,z_2(y_2).\, y_2(\chi_2).\, \overline{g} y_2.\,\mathbf{0} )\\
X & \triangleq & (\nu a_0,a_1)\,(\overline{a_0}\, a_1.\mathbf{0}\mid \,!a_0(x_1).a^{\ast}(\chi^{\ast}) .\overline{a} \chi^{\ast}.\overline{a_0}\, a_1.\mathbf{0}).
\end{array}\]

But, such simple revisions will cause the additional defects as follows.\\

\noindent\textbf{(Flaw 2)} ~~After the evolution step (\ref{eq:example-asychronous-2}), the evolution of $(M,s) \blacktriangleright G_2$ may also continue with an interaction between the components of $Y$ via the bound name $g$, and further, it is possible that $Z_2$ sends $b_1$ to the continuation of $Y$, not to $P$, via $h$. But this is not desired. Instead of this, we want the agents $A$ and $B$ to read from the cells, e.g., through $P_2$.\\

\noindent \textbf{(Revise 2)} ~~We alias each cell, such as $h$ with the alias $h'$ and $b_1$ with the alias $b'_1$, and each cell is read by its \emph{alias}. Correspondingly, $P$, $Q$, $Z$ and $Y$ are revised as follows, with the revised parts underlined.
\[\begin{array}{lll}
G& \triangleq &(\nu h,a,c,\underline{h',c'},f,d)\,([P\mid Q\mid Z]_{D^{\blacklozenge}} \parallel [X ]_{D}\parallel H )\\
P &\triangleq & h(z).\,a(\chi).\,\overline{f} d.\, !\,\underline{\overline{h'} z}.\, \overline{z}\chi .\,\mathbf{0}\\
Q&\triangleq & !\,(\nu b,b')\, (\overline{c} b.\,\underline{\overline{c'} b'}.\,b(y).\,a(\chi_1).\,\overline{f} d .\, !\,\underline{\overline{b'} y}.\,\overline{y}\chi_1.\,\mathbf{0})\\
Z& \triangleq & \nu e\, (\overline{e} h.\mathbf{0}\mid \,!\, e(z_1).\,c(y_1).\,\underline{c'(y'_1)}.\,\underline{\overline{z_1} y'_1}.\, f(x).\, \overline{e} y_1.\mathbf{0} )\\
Y &\triangleq & \nu g\, (\underline{\overline{g} h'}.\,\mathbf{0}\mid \,!\, g(z_2).\,z_2(y_2).\, y_2(\chi_2).\, \overline{g} y_2.\,\mathbf{0} )\\
X & \triangleq & (\nu a_0,a_1)\,(\overline{a_0}\, a_1.\mathbf{0}\mid \,!\,a_0(x_1).a^{\ast}(\chi^{\ast}) .\overline{a} \chi^{\ast}.\overline{a_0}\, a_1.\mathbf{0}).
\end{array}\]
The process $Q$ builds a new cell by sending a fresh bound name via $c$ and \emph{aliases} this cell by sending a fresh bound name via $c'$.\\

Similarly, we may give the evolution of the final e-system $G$ revised at the epistemic state $(M,s) $. Through this evolution, we easily see that it is the \emph{aliases} that are stored in the cells as links, and specially, $P_2$ and $Q_1$ should be redefined as
\[\begin{array}{lll}
P_2 &\triangleq &!\,\underline{\overline{h'} b'_1}.\, \underline{\overline{b'_1} q_1}.\,\mathbf{0}\\
Q_1 &\triangleq & b_1(y).\,a(\chi_1) .\, \overline{f} d.\,!\,\underline{\overline{b'_1} y}.\,\overline{y} \chi_1.\,\mathbf{0}.
\end{array}\]

Further, after $n$ cells have been constructed, which store $q_1$, $\cdots$, $q_n$ respectively, and then these facts have just been read by both $A$ and $B$, all the evolution paths may reach the same \emph{state}, with the same epistemic state up to $\underline{\leftrightarrow}$ by (\ref{eq:as-1}), Proposition~\ref{prop:action execution preserves fact cognizability} and Proposition~\ref{lemma:commutativity} as follows
\[\begin{array}{lll}
(M',s')\otimes (\verb"M"^{q_1,D^{\blacklozenge},A},\mathbf{u_1 })\cdots\otimes (\verb"M"^{q_n,D^{\blacklozenge},A},\mathbf{u_n })\qquad\qquad\\
\qquad\quad\;\qquad\qquad\otimes (\verb"M"^{q_1,D^{\blacklozenge},B},\mathbf{v_1 })\cdots\otimes (\verb"M"^{q_n,D^{\blacklozenge},B},\mathbf{v_n })
\end{array}\]
and the same e-system up to isomorphism of LTSs as follows
\begin{multline*}
(\nu h,a,c,b_1,\cdots,b_n,h',c',b'_1,\cdots,b'_n,f,d)\,(
[P']_{D^{\blacklozenge}} \parallel [X ]_{D}\parallel [Y']_{A} \parallel [Y']_{B} \,)
\end{multline*}
where
\[\begin{array}{lll}
P' &\triangleq & P_2\mid Q' \mid Q_n\mid Q\mid Z'\\
Q' &\triangleq&  \underset{1\leq i \leq n-1}{ \mid} \,!\, \overline{b'_i} b'_{i+1}.\, \overline{b'_{i+1}} q_{i+1}.\,\mathbf{0}   \\
Q_n &\triangleq & b_n(y).\,a(\chi_1).\,\overline{f} d.\, !\,\overline{b'_n} y.\, \overline{y} \chi_1.\,\mathbf{0}\\
Z'& \triangleq & \nu e\, (\overline{e} b_n.\mathbf{0}\mid \,!\, e(z_1).c(y_1). c'(y'_1).\overline{z_1}\, y'_1.f(x). \overline{e} y_1.\mathbf{0} )\\
Y' &\triangleq & \nu g\, (\overline{g} b'_n.\,\mathbf{0}\mid \,!\, g(z_2).\,z_2(y_2).\, y_2(\chi_2).\, \overline{g} y_2.\,\mathbf{0} ).
\end{array}\]
Here, $(\verb"M"^{q_i,D^{\blacklozenge},B},\mathbf{v_i })$ is the pointed action model for passing $q_i$ from $D^{\blacklozenge}$ to $B$ with $1\leq i \leq n $.
\begin{remark}\label{rem:FIFO}
Applying the e-calculus, we can construct a \emph{chained} buffer pool with the heading cell $h$, its alias $h'$, and $n$ storage cells $b_1$, $\cdots$, $b_n$, their aliases $b'_1$, $\cdots$, $b'_n$ respectively, which store the facts $q_1$, $\cdots$, $q_n$ obtained from some external source respectively; moreover, the aliases $b'_1$, $\cdots$, $b'_n$ are stored in the cells $h$, $b_1$, $\cdots$, $b_{n-1}$ as \emph{links} respectively. The agents can read these facts through the \emph{cell aliases} from the buffer pool in the order \emph{FIFO}.
\end{remark}
\subsection{Reading in an arbitrary order}\label{subsec-asynchronous communication-arbitrary order}
This subsection will define an e-system as an example that  provides an unbounded buffer pool which can be read in an arbitrary order (i.e.,  \emph{unchained}).


At first glance, we may consider the following e-system $G$ where
\[\begin{array}{lll}
G& \triangleq &(\nu a,c)\,([\,Q\,]_{D^{\blacklozenge}} \parallel [\,X\,]_{D}\parallel \underset{1\leq j \leq m}{ \parallel} [\,Y\,]_{A_j}) \\
Q&\triangleq & !\,\nu b\, (\overline{c} b.\,b(\chi) .\, !\,\overline{a} b.\,\overline{b} \chi.\,\mathbf{0})\\
Y &\triangleq & \,!\, a(y).\,y(\chi').\, \mathbf{0}\\
X & \triangleq & (\nu e,d)\,(\overline{e} d.\,\mathbf{0}\mid \,!\,e(z).\,a^{\ast}(\chi^{\ast}) .\,c(x).\,\overline{x} \chi^{\ast}.\,\overline{e} d.\,\mathbf{0}).
\end{array}\]
with $m\geq 1$. Similar as in Subsection~\ref{subsec-asynchronous communication-FIFO}, the process $Q$ is a generator of storage cells, and the difference is that cells do not store the names (\emph{links}) of other cells, that is, the buffer pool is \emph{unchained}. The agent $D^{\blacklozenge} $ manages the buffer pool constructed  through $Q$. The agent $D$ obtains basic facts from an external source via the channel $a^{\ast}$ and stores these facts
 into the buffer pool. Thus, the agents $A_1$, $\cdots$, $A_m$ may read these facts from the buffer pool in an arbitrary order.

 Nevertheless, this e-system $G$ has the  possibility of evolution below, which is undesired.\\

\noindent\textbf{(Flaw)} ~~Evidently, the combinations $!\,\overline{a} b.\,\overline{b} \chi$ and $!\, a(y).\,y(\chi')$ may interact repeatedly, which means that the agents  might read the \emph{same} cells \emph{repeatedly}.
By Proposition~\ref{prop:succ1}, Proposition~\ref{prop:succ2}, Proposition~\ref{prop:action execution preserves fact cognizability}, Proposition~\ref{prop:idempotent} and Proposition~\ref{lemma:commutativity}, this kind of repetition would not affect epistemic states. \\

\noindent\textbf{(Revise)} ~~In order for each agent to read each cell only once, in the following, we will revise $Q$ and $Y$ by providing each agent $A_j$ ($1\leq j \leq m$) with a \emph{special visiting channel} $a_j$ which will be opened for each cell only once. The revised parts are underlined to emphasize.
\[\begin{array}{lll}
G& \triangleq &(\nu \underline{a_1,\cdots,a_m},c)\,([Q]_{D^{\blacklozenge}} \parallel [X]_{D}\parallel \underset{1\leq j \leq m}{ \parallel} [\underline{Y_j}]_{A_j} )\\
Q&\triangleq & !\, \nu b\, \left(\overline{c} b.\,b(\chi) . \left(\underline{(\underset{1\leq j \leq m}{\mid}\overline{b} a_j.\,\mathbf{0})}\mid \, !\,\underline{b(y).\,\overline{y}\chi}.\,\mathbf{0}\right) \right)\\
\underline{Y_j} &\triangleq & !\, \underline{a_j(\chi_j)}.\, \mathbf{0}\\
X & \triangleq & (\nu e,d)\,(\overline{e} d.\,\mathbf{0}\mid \,!\,e(z).\,a^{\ast}(\chi^{\ast}) .\,c(x).\,\overline{x} \chi^{\ast}.\,\overline{e} d.\,\mathbf{0}).
\end{array}\]
Here, in the process $Q$, the interaction between $\overline{b} a_j$ and $b(y)$ realizes that the channel $a_j$ is opened for the cell $b$ and then $A_j$ can read from the cell $b$ via $a_j$.
\begin{remark}\label{rem:arbitrary}
Applying the e-calculus, we can construct a \emph{unchained} buffer pool with its storage cells provided with a special visiting channel. The agents can read from each cell through its special visiting channel in an \emph{arbitrary order}.
\end{remark}
\section{Discussion}\label{sec:discussion}
Based on the classical $\pi$-calculus, we have presented the e-calculus to deal with epistemic interactions in the concurrency situations. It has no doubt that the e-calculus may formalize synchronous communication between two agents. We also adopts the e-calculus to specify intelligent multi-agent systems associated with epistemic attitudes in asynchronous situations. Unlike dynamic epistemic logics, the e-calculus focuses on the operational semantics of e-systems referring to epistemic interactions instead of static logic laws concerning epistemic actions (e.g., public announcement, private announcement, asynchronous communication, etc). On the other hand, through applying AML to describe epistemic states, the e-calculus can indirectly reflects logical specifications of AML and so describe indirectly the logical laws of epistemic effects caused by epistemic interactions.

In~\citep{Knight2017reasoningaboutknowledgeinasynchronoussystem}, in order to illustrate static logic laws for asynchronous communication, each agent is provided with a \emph{private} FIFO channel, which receives the messages (formulas) announced publicly from one trusted public source. The  literature~\citep{Ditmarsch2017asynchronousannouncement} and~\citep{Balbiani2021asynchronousannouncement} made the similar assumption. These three papers focus on receivers and their cognition changes. In this paper, it is \emph{between agents} that the messages (propositional letters) are passed. Subsection~\ref{subsec-asynchronous communication-FIFO} (\ref{subsec-asynchronous communication-arbitrary order}) constructs a buffer pool  storing these propositional letters which can be read in FIFO order (resp., in an arbitrary order). Such a buffer pool  may be  \emph{shared} by all agents.

In~\citep{Knight2017reasoningaboutknowledgeinasynchronoussystem,Ditmarsch2017asynchronousannouncement,Balbiani2021asynchronousannouncement}, \emph{truthful} announcements are publicly sent. Due to this, certain (invalid) message sequences can be ruled out, just like \emph{inconsistent cuts} in distributed computing. A \emph{cut} is a list stating which announcements have been received by which agents.
This paper, as usual, makes the assumption that an agent can only send the messages that it \emph{knows} at the current epistemic state (see, the SOS rule (OUT$_{\text{e}}$)). Through stating whether $M,s\models \Box_A q $ holds, we may determine whether the agent $A$ has the capability for sending the fact $q$ or not at the epistemic state $(M,s)$. It is evident that this is more concise and intuitive.

\bibliographystyle{plain}
\bibliography{cei}

\begin{thebibliography}{10}

\bibitem{Agotnes2010groupannouncement}
T.~Agotnes, P.~Balbiani, H.~van Ditmarsch, and P.~Seban.
\newblock Group announcement logic.
\newblock {\em Journal of Applied Logic}, 8:62--81, 2010.

\bibitem{Azizi2019flyingsatellites}
S.~M. Azizi and K.~Khorasani.
\newblock A hierarchical architecture for cooperative actuator fault estimation
  and accommodation of formation flying satellites in deep space.
\newblock {\em IEEE Transactions on Aerospace and Electronic Systems},
  48(2):1428--1450, 2019.

\bibitem{Balbiani2008publicannoucement}
P.~Balbiani, A.~Baltag, H.~van Ditmarsch, A.~Herzig, T.~Hoshi, and T.~De Lima.
\newblock 'knowable' as 'known after an announcement'.
\newblock {\em Reviewer of symbolic Logic}, 1(3):305--334, 2008.

\bibitem{Balbiani2021asynchronousannouncement}
P.~Balbiani, H.~van Ditmarsch, and S.~F. Gonz$\acute{a}$lez.
\newblock Asynchronous announcement.
\newblock {\em Artificial Intelligence, arXiv:1705.03392v4}, pages 1--48, 2021.

\bibitem{Baltag1998PAandCKandPS}
A.~Baltag, L.S. Moss, and S.~Solecki.
\newblock The logic of public annoucements, common knowledge, and private
  suspicious.
\newblock In {\em Proceedings of the 7th TARK}, pages 43--56. Morgan Kaufmann,
  1998.

\bibitem{Benevides2020DynamicEpistemicLogicwithAssignmentsConcurrencyandCommunicationActions}
M.~Benevides, M.~Folhadela, and I.~Lima.
\newblock {\em Dynamic epistemic logic with assignments, concurrency and
  communication actions}.
\newblock Workshop Brasileiro de L¨®gica, 2020.

\bibitem{Enesser2020Gamedescriptionlanguageanddynamicepistemiclogiccompared}
T.~Engesser, R.~Mattmuller, B.~Nebel, and M.~Thielscher.
\newblock Game description language and dynamic epistemic logic compared.
\newblock {\em Artificial Intelligence}, 292(103433):1--21, 2021.

\bibitem{Fagin2003reasoningaboutknowledge}
R.~Fagin, J.~Y. Halpern, Y.~Moses, and M.~Y. Vardi.
\newblock {\em Reasoning about knowledge}.
\newblock The MIT Press, 2003.

\bibitem{French2014composablelanguageforactionmodels}
T.~French, J.~Hales, and E.~Tay.
\newblock A composable language for action models.
\newblock In {\em R. Gor\'{e}, B. Kooi, and A. Kurucz (Eds.), Advances in modal
  logic 10}, pages 197--216. College Publications, 2014.

\bibitem{Hales2013arbitraryactionmodellogic}
J.~Hales.
\newblock Arbitrary action model logic and action model synthesis.
\newblock In {\em Proceedings of the 2013 28th Annual IEEE/ACM Symposium on
  Logic in Computer Science}, pages 253--262. IEEE Computer Society, 2013.

\bibitem{harikumar2019multiUAV}
K.~Harikumar, J.~Senthilnath, and Suresh Sundaram.
\newblock Multi-uav oxyrrhis marina-inspired search and dynamic formation
  control for forest firefighting.
\newblock {\em IEEE Transactions on Automation Science and Engineering},
  16(2):863--873, 2019.

\bibitem{Hu2020distributedadaptivetime}
J.~Hu, P.~Bhowmick, and A.~Lanzon.
\newblock Distributed adaptive time-varying group formation tracking for
  multi-agent systems with multiple leaders on directed graphs.
\newblock {\em IEEE Transactions on Control of Network Systems}, 7(1):140--150,
  2020.

\bibitem{Knight2017reasoningaboutknowledgeinasynchronoussystem}
S.~Knight, B.~Maubert, and F.~Schwarzentruber.
\newblock Reasoning about knowledge and messages in asynchronous multi-agent
  systems.
\newblock {\em Mathematical Structures in Computer Science}, 29(1):1--42, 2017.

\bibitem{Milner1989picalculus}
R.~Milner, J.~Parrow, and D.~Walker.
\newblock {\em A calculus of mobile processes, parts I and II}.
\newblock Technical Report ECS-LFCS-89-85 and -86, University of Edinburgh,
  Scotland, 1989.

\bibitem{Milner1992picalculus}
R.~Milner, J.~Parrow, and D.~Walker.
\newblock A calculus of mobile processes, parts $\text{I and II}$.
\newblock {\em Information and Computation}, 100(1):1--77, 1992.

\bibitem{pettinati2019nmultiagentsystemrobotteams}
M.~J. Pettinati and R.~C. Arkin.
\newblock Push and pull: shepherding multi-agent robot teams in adversarial
  situations.
\newblock In {\em Proceedings of 2019 IEEE International Conference on Advanced
  Robotics and its Social Impacts (ARSO), IEEE}, volume~44, pages 665--679,
  2019.

\bibitem{Plaza2007publiccommunicationlogic}
J.~Plaza.
\newblock Logics of public communications.
\newblock {\em Synthese}, 158:165--179, 2007.

\bibitem{Plotkin2004astructuralapproachtooperationalsemantics}
G.~D. Plotkin.
\newblock A structural approach to operational semantics.
\newblock {\em Report DAIMI FN-19, Computer Science Department, Aarhus
  University, 1981. Also in, The Journal of Logic and Algebraic Programming},
  60-61:17--139, 2004.

\bibitem{Renne2016Logicsoftemporalepistemicactions}
B.~Renne, J.~Sack, and A.~Yap.
\newblock Logics of temporal-epistemic actions.
\newblock {\em Synthese}, 193(3):813--849, 2016.

\bibitem{Sangiorgi2001pi-calculus}
D.~Sangiorgi and D.~Walker.
\newblock {\em The pi-calculus: a theory of mobile processes}.
\newblock Cambridge University Press, 2001.

\bibitem{seuken2008formalmodelsandalgorithmsunderuncertainty}
S.~Seuken and S.~Zilberstein.
\newblock Formal models and algorithms for decentralized decision making under
  uncertainty.
\newblock {\em Autonomous Agents and Multi-agent Systems}, 17(2):190--250,
  2008.

\bibitem{Benthem2006logicofccommunicationsandchange}
J.~van Benthem, J.~van Eijck, and B.~Kooi.
\newblock Logics of communication and change.
\newblock {\em Information and Computation}, 204:1620--1662, 2006.

\bibitem{Ditmarsch1999Thelogicofknowledgegamesshowingacard}
H.~van Ditmarsch.
\newblock The logic of knowledge games: showing a card.
\newblock In {\em Proceedings of the BNAIC}.

\bibitem{Ditmarsch2017asynchronousannouncement}
H.~van Ditmarsch.
\newblock Asynchronous announcement.
\newblock {\em Artificial Intelligence, arXiv:1705.03392v2}, pages 1--17, 2017.

\bibitem{Ditmarsch2009simulationandinformation}
H.~van Ditmarsch and T.~French.
\newblock Simulation and information.
\newblock In {\em Knowledge Representation for Agents and Multi-agent Systems,
  LNAI, presented at LOFT 2008 and KRAMAS 2008}, volume 5605, pages 51--65,
  2009.

\bibitem{Ditmarsch2010Futureeventlogicaxioms}
H.~van Ditmarsch, T.~French, and S.~Pinchinat.
\newblock Future event logic-axioms and complexity.
\newblock In {\em Advances in Modal logic, Moscow}, volume~8, pages 77--99.
  College Publications, 2010.

\bibitem{Ditmarsch2005Dynamicepistemiclogicwithassignment}
H.~van Ditmarsch, W.~van~der Hoek, and B.~Kooi.
\newblock Dynamic epistemic logic with assignment.
\newblock In {\em Proceedings of the 4th International Joint Conference on
  Autonomous Agents and Multiagent Systems, ACM}, pages 141--148, 2005.

\bibitem{Ditmarsch2007dynamicepistemiclogic}
H.~van Ditmarsch, W.~van~der Hoek, and B.~Kooi.
\newblock Dynamic epistemic logic.
\newblock In {\em Synthese Library}, volume 337. Springer, 2007.

\end{thebibliography}

\end{document}